\title{ParaBlock: Communication-Computation Parallel Block \mbox{Coordinate} Federated Learning for Large Language \mbox{Models} }
\author{\name Yujia Wang \email yjw5427@psu.edu \\
      \addr College of Information Sciences and Technology \\ Pennsylvania State University
      \AND
      \name Yuanpu Cao \email ymc5533@psu.edu \\
      \addr College of Information Sciences and Technology \\ Pennsylvania State University
      \AND
      \name Jinghui Chen \email jzc5917@psu.edu \\
      \addr College of Information Sciences and Technology \\ Pennsylvania State University
      }
\theoremstyle{plain}
\newtheorem{theorem}{Theorem}[section]
\newtheorem{lemma}[theorem]{Lemma}
\newtheorem{corollary}[theorem]{Corollary}
\theoremstyle{definition}
\newtheorem{assumption}[theorem]{Assumption}
\theoremstyle{remark}
\newtheorem{remark}[theorem]{Remark}
\definecolor{carnelian}{rgb}{0.7, 0.11, 0.11}
\definecolor{lemonchiffon}{rgb}{1.0, 0.98, 0.8}
\definecolor{darkblue}{rgb}{0.0, 0.0, 0.55}
\definecolor{darkgreen}{RGB}{0, 120, 0}
\definecolor{lightyellow}{rgb}{1.0, 1.0, 0.8}
\definecolor{lightblue}{rgb}{0.8, 0.9, 1.0}
\newcommand{\ifcomments}{\iftrue}
\begin{document}

\maketitle

\begin{abstract}
Federated learning (FL) has been extensively studied as a privacy-preserving training paradigm. Recently, federated block coordinate descent scheme has become a popular option in training large-scale models, as it allows clients to train only a subset of the model locally instead of the entire model. However, in the era of large language models (LLMs), even a single block can contain a significant number of parameters, posing substantial communication latency, particularly for resource-constrained clients. To address this challenge in federated training/fine-tuning LLMs, we propose ParaBlock, a novel approach that establishes two parallel threads for communication and computation to enhance communication efficiency. We theoretically prove that the proposed ParaBlock achieves the same convergence rate as the standard federated block coordinate descent methods. Empirical evaluations on fine-tuning LLMs on general instruction following and mathematical reasoning confirm that ParaBlock not only maintains strong performance but also significantly improves communication efficiency.
\end{abstract}

\section{Introduction}
Federated learning (FL) \citep{mcmahan2017communication} is known as a promising privacy-preserving paradigm, as it keeps data on local clients without exposing sensitive information. A typical FL framework consists of a central server and multiple local clients operating in a single-threaded design. In this setup, clients repeatedly train local models, transmit the updates to the server for aggregation, and wait for the updated model from the server to continue training in the next round. While standard FedAvg type frameworks \citep{mcmahan2017communication, karimireddy2020scaffold, li2020fedprox, wang2023lightweight, wang2024tackling} have demonstrated success across various applications~\citep{koutsoubis2024privacy,beutel2020flower}, federated block coordinate descent schemes \citep{liu2019communication, wu2021federated,liu2024fedbcgd,wang2024save} have recently gained increasing attention. Such schemes integrate local block coordinate descent (BCD) update strategies into FL, allowing clients to train only a subset of the model (commonly a block) locally instead of the entire model.
Although federated BCD still operates in a single-thread manner, where clients need to wait for the model transmission to complete before the next computation step, such communication time is often negligible when models are smaller and the block size is limited. 

However, with the recent trend of billion-scale large language models (LLMs), such as GPT-3~\citep{brown2020language}, Llama 3~\citep{llama3modelcard} and Gemma 2~\citep{team2024gemma}), even a single layer can encompass tens to hundreds of millions of parameters. 
This massive scale significantly increases the communication time between the server and clients. While such communication time was considered negligible in traditional federated block coordinate methods, it has now become a notable factor when fine-tuning LLMs on edge clients. This raises a critical bottleneck for fine-tuning LLMs with federated block coordinate descent methods: for clients with limited network bandwidth or those requiring long-distance transmission, the communication latency significantly impacts the overall efficiency of FL deployment. This inefficiency of such a single-thread approach hinders the scalability of federated block coordinate descent methods for LLMs, underscoring the need for solutions to reduce communication latency for clients. 

To address the aforementioned challenge, we propose ParaBlock, a federated communication-computation \textbf{Para}llel \textbf{Block} coordinate descent method that provides a simple yet effective way to improve the communication efficiency of BCD updates for fine-tuning large-scale models on resource-constrained clients. The key contributions of this paper are summarized as follows:
\vspace{-5pt}
\begin{itemize}
    \item We propose ParaBlock, a novel method designed to address communication latency during the fine-tuning of LLMs using federated block coordinate descent. ParaBlock replaces the traditional single-threaded design with two parallel threads for communication and computation, effectively reducing communication delays and improving efficiency. 
    \item We rigorously show that the proposed ParaBlock algorithm achieves a convergence rate of $\cO(1/\sqrt{T})$ for non-convex objectives. This rate is consistent with that of standard federated block coordinate descent methods, ensuring that the improvements of ParaBlock in communication efficiency do not compromise its convergence performance.  
    \item We conduct extensive experiments across various models and tasks to empirically validate the effectiveness of ParaBlock. Specifically, we evaluate the downstream performance of ParaBlock on general instruction following and mathematical reasoning tasks. Our results demonstrate that ParaBlock significantly reduces wall-clock runtime while maintaining performance on par with standard federated block coordinate descent baselines, highlighting its ability to enhance efficiency without compromising utility.
\end{itemize}

\section{Related Works}
\textbf{BCD for FL} 
Block Coordinate Descent (BCD) has been extensively studied for optimization problems~\citep{tseng2001convergence, nesterov2012efficiency, beck2013convergence}, particularly due to its efficiency in solving large-scale optimization tasks. Recent advancements in BCD variants, such as those proposed by~\cite{luo2024badam, pan2024lisa}, further highlight its adaptability and effectiveness, especially in the context of large language models. Federated BCD schemes \citep{liu2019communication,liu2024fedbcgd,wu2021federated} enhance communication efficiency by assigning each client the responsibility for a partition of the model in a collaborative training framework. FedCyBGD \citep{wang2024save} focuses on fine-tuning large language models by cyclically electing one client at a time to train its assigned block of the model, enabling efficient fine-tuning with limited computational resources. 

\textbf{Communication-efficiency for FL}
Various methods have been proposed to improve communication efficiency in federated learning, focusing on techniques such as quantization and update compression~\citep{reisizadeh2020fedpaq,haddadpour2019local,jin2020stochastic,jhunjhunwala2021adaptive,wang2022communication,li2024fedbat}, model pruning \citep{li2021fedmask,jiang2022model,isik2022sparse}, and model distillation \citep{wu2022communication}. Several existing works have been shown to enhance communication efficiency specifically in the context of federated fine-tuning of LLMs. For example, many applications of PEFT in FL help reduce communication costs~\citep{liu2019communication,liu2024fedbcgd}. Additionally, FedMKT \citep{fan2024fedmkt} leverages a mutual knowledge transfer framework, where clients and the server exchange knowledge datasets instead of model updates. Similarly, CG-FedLLM \citep{wu2024cg} introduces an encoder on clients to compress gradient features, with a corresponding decoder on the server to reconstruct the transmitted features.

\textbf{FL for LLMs}
In the era of LLMs, FL has evolved to develop more complex fine-tuning and deployment of large language models \citep{zhang2024towards, ye2024openfedllm}. Several existing approaches explore the use of Parameter Efficient Fine-Tuning (PEFT) techniques for federated fine-tuning of LLMs. These include leveraging low-rank adapters (LoRA) in methods such as FedIT \citep{zhang2024towards}, the OpenFedLLM framework \citep{ye2024openfedllm}, HetLoRA \citep{cho2024heterogeneous}, SLoRA \citep{babakniya2023slora}, FLoRA \citep{wang2024flora}, FlexLoRA \citep{bai2024federated} as well as employing adapters and bias-tuning in FedPEFT \citep{sun2022conquering}. In addition to the widely adopted PEFT methods in FL, recent work, such as FedCyBGD \citep{wang2024save}, also highlights the potential of BCD in improving federated fine-tuning performance while managing resource limitations.
\section{Preliminaries and Motivations}
We start with a standard federated learning objective with $N$ total workers: 
\begin{small}
\begin{align}\label{eq:fl-basic}
    \min_{\btheta \in \RR^d} f(\btheta) := \frac{1}{N}\sum_{i=1}^N f_i(\btheta) = \frac{1}{N} \sum_{i=1}^N \EE_{\xi_i\sim\cD_i} [f_i (\btheta; \xi_i)],
\end{align}
\end{small}

where $\btheta\in\mathbb{R}^d$ indicates the model parameter with $d$ dimensions, $f_i(\btheta)$ represents the loss function associated with client $i$, and $\cD_i$ denotes the local data distribution for client $i$. 
FedAvg \citep{mcmahan2017communication} is widely used for solving \eqref{eq:fl-basic}. In $t$-th global round, each participating client $i$ performs local training using standard SGD optimizers. The server periodically collects and aggregates the local models or updates to obtain a new global model.

\textbf{BCD and Block-wise update for FL} BCD usually has the same optimization objective as (stochastic) gradient descent. It iteratively optimizes over a small subset of parameters, while keeping the remaining parameters fixed. This approach makes BCD practical for large-scale model training. Federated BCD is then applied in FL by substituting BCD for the SGD optimizer in FedAvg. Consider a block partitioning of the model parameters $\btheta$ into $B$ blocks, i.e., $\btheta = \big[[\btheta]_1, [\btheta]_2, \ldots, [\btheta]_B \big]$. In federated BCD schemes, the goal is to approximately optimize the objective in \eqref{eq:fl-basic} by updating only one block during local training, while keeping the remaining blocks unchanged across clients. For example, if block $b_t$ is assigned for training at global round $t$, the block-wise optimization for $[\btheta]_{b_t}$ is defined as: 
\begin{align*}
    \argmin_{[\btheta]_{b_t} } f(\btheta) := \frac{1}{N}\sum_{i=1}^N f_i([\btheta]_1, \ldots, [\btheta]_{b_t}, \ldots, [\btheta]_B), 
\end{align*}
where $[\btheta]_{b_t} \in \RR^{d_{b_t}}$ and $d_{b_t}$ denotes the dimension for block $b_t$ and there is $\sum_{b=1}^B d_b = d$. We summarize the key component of the local BCD update for federated BCD in Algorithm \ref{alg:bcd}. Specifically, during global round $t$, client $i$ approximately optimizes local objective $f_i$ with the stochastic gradient of $f_i$:
\begin{align}\label{eq:bcd}
    \nabla f_i(\btheta; \xi) = \bigg[\frac{\partial f_i}{\partial [\btheta]_1}, \ldots ,\frac{\partial f_i}{\partial [\btheta]_{b_t}}, \ldots, \frac{\partial f_i}{\partial [\btheta]_B} \bigg]^T,
\end{align}
where the partial stochastic gradient for block $b_t$ is denoted as $ [\bg^i]_{b_t} =[\nabla f_i(\btheta; \xi)]_{b_t}$. The local model training could be summarized in Line 4 in Algorithm \ref{alg:bcd}. After completing $K$ local steps of training, the client calculates the difference in local updates as shown in Line 6. The client then sends $\bDelta^i$ to the server to update the global model, and the server sends back the global model to clients for next round of local training. 

\textbf{The communication bottleneck for federated BCD} 
While federated BCD has been applied to large-scale FL training, it still faces subsequent communication inefficiency in the era of LLMs. With the significantly larger model size and the accompanied large communication time, the current single-thread design in the federated BCD method (communication-computation-communication, as shown in Figure~\ref{fig:framework}), has led to increased overall runtime due to the outstanding communication latency. 
This prolonged communication time forces clients to wait extensively before receiving the latest global information and proceeding to the next round of local computation. Such communication latency poses a critical bottleneck during the deployment of federated BCD, motivating us to explore new methods for improving communication efficiency and the scalability of federated BCD. 
Drawing inspiration from distributed training paradigms, if clients can change from the single thread design to two parallel threads that overlap the local computation with the model communication, then the communication latency can be ideally eliminated. 

\begin{wrapfigure}{r}{0.55\textwidth}
\vspace{-5pt}
\begin{minipage}{\linewidth}
\begin{algorithm}[H]
\caption{\texttt{LocalBlockTraining}}
  \label{alg:bcd}
  \begin{flushleft}
        \textbf{Input:} local learning rate $\eta_l$, global model $\btheta$ with $B$ blocks, number of local steps $K$, assigned block $b$
        \end{flushleft}
  \begin{algorithmic}[1]
      \STATE Initialize model $\btheta^i = \btheta$ on client $i$
        \FOR{$k=0$ to $K-1$}
        \STATE Compute local partial stochastic gradient $ [\bg_k^i]_b =[\nabla f_i(\btheta_k^i; \xi)]_b$ 
        \STATE Local update: $\btheta_{k+1}^i \leftarrow \btheta_k^i, \quad  [\btheta_{k+1}^i]_b \leftarrow [\btheta_k^i]_b - \eta_l [\bg_k^i]_b$
        \ENDFOR
        \STATE Client gets $\bDelta^i = [\btheta_{K}^i]_b - [\btheta^i]_b$
    \end{algorithmic}
    \textbf{Output:} $\bDelta^i$
\end{algorithm}
\end{minipage}
\end{wrapfigure}
\section{ParaBlock: A new federated fine-tuning method}
To achieve this communication computation overlapping in federated BCD methods, we introduce ParaBlock, a \textbf{Para}llel \textbf{Block} coordinate descent method designed for fine-tuning LLMs in federated learning. ParaBlock aims to parallelize the communication thread and computation thread on local clients (as shown in Figure~\ref{fig:framework}), enabling clients to perform local updates while concurrently communicating with the server. This design ensures that clients no longer need to wait for communication to finish, significantly reducing clients' idle time and accelerating the federated fine-tuning process.
\begin{wrapfigure}{r}{0.55\textwidth}
    \centering
    \includegraphics[width=1\linewidth]{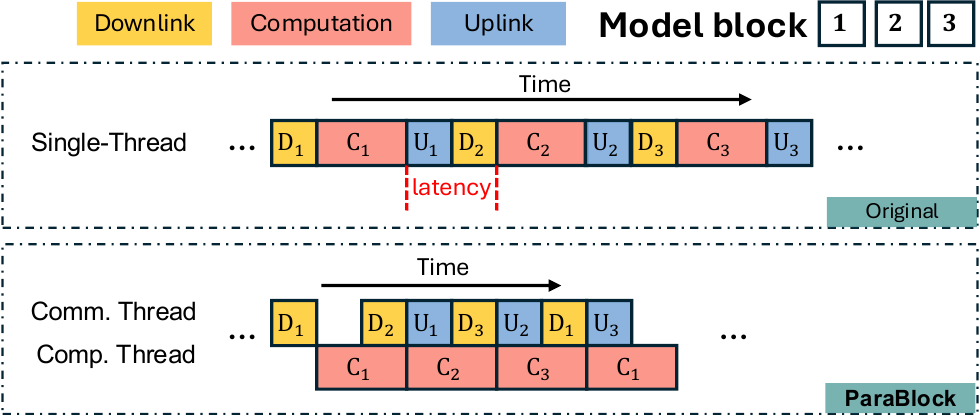}
    \caption{Comparison between the original federated BCD and the proposed ParaBlock. The original BCD's single-thread approach leads to higher runtime due to communication latency, while ParaBlock improves efficiency by overlapping communication and computation.}
    \label{fig:framework}
\end{wrapfigure}
We summarize the proposed ParaBlock in Algorithm \ref{alg:ParaBlock}. In a nutshell, the communication thread and computation thread proceed \textit{in parallel}. Specifically, for each client $i$, the \sethlcolor{lightblue}\hl{computation thread} fine-tunes block $b_t$ using the \texttt{LocalBlockTraining} summarized in Algorithm \ref{alg:bcd}. During this computation, except for the first round when $t=0$, client $i$ proceeds with the \sethlcolor{lightyellow}\hl{communication thread} to synchronize with the server regarding the model update variables from the previous round (\textit{one round behind}). This involves sending $\bDelta_{t-1}^i$ to the server and receiving the aggregated $\bDelta_{t-1}$ from the server. Once both threads are completed, client $i$ immediately updates its local model for the next round of training, as illustrated in Line 9. Note that the new local model $\btheta_{t+1}^i$ inherits most of the parameters from the current model $\btheta_t^i$ but incorporates two key update steps: 

\quad 1. The latest fine-tuned block $b_t$ of the local model is updated with $\bDelta_t^i$, ensuring that $[\btheta_{t+1}^i]_{b_t}$ is updated with the latest local information before starting the next computation thread. 

\quad 2. The client uses the received global variable $\bDelta_{t-1}$ to correct the previous update on the local block $b_{t-1}$ by applying a correction $[\btheta_{t+1}^i]_{b_{t-1}} \leftarrow [\btheta_t^i]_{b_{t-1}} + \eta (\bDelta_{t-1} - \bDelta_{t-1}^i)$. This is because the previous update on block $b_{t-1}$ only used the local update $\bDelta_{t-1}^i$ (since the global update $\bDelta_{t-1}$ is one round behind and has not arrived yet at that time), and now we want it to be consistent with the global model by replacing the local update with the global one.



After two blocks are updated in $\btheta_{t+1}^i$, client $i$ begins the new communication and computation thread for global round $t+1$. Therefore, unlike standard federated BCD methods and most traditional single-thread FL paradigms, the two-thread parallel design of ParaBlock allows clients to continue local fine-tuning while conducting communication. Although the global information is updated to the local model with one-round delay, the impact caused by this delay can be effectively mitigated by correction of ParaBlock.

Moreover, while clients update the local model, the server simultaneously updates the global model using the aggregated $\bDelta_{t-1}$ (Line 10 in Algorithm \ref{alg:ParaBlock}). Compared with standard federated BCD methods, here the global model $\btheta_t$ at round $t$ exhibits one-round staleness because the most recently trained block $b_t$ has not yet been incorporated into the global model. Due to this staleness, an extra communication and aggregation step is required at the end of the fine-tuning (Lines 12-14) to ensure that the final computation results are aggregated and updated to the global model, then the server obtains the final model $\btheta_{T}$. 

\begin{algorithm}[ht!]
\caption{ParaBlock}
  \label{alg:ParaBlock}
  \begin{flushleft}
        \textbf{Input:} local learning rate $\eta_l$, global learning rate $\eta$, number of block partition $B$
        \end{flushleft}
  \begin{algorithmic}[1]
        \STATE Initialize global model $\btheta_0$ and generate a block partition $b=1,2,\ldots,B$  
      \FOR{$t=0$ to $T-1$}
      \STATE Each client $i$ runs a compute thread and a communication thread \textbf{in parallel}:
      \begin{tcolorbox}[colframe=lightblue, colback=lightblue, boxsep=1pt,left=0pt,right=0pt,top=1pt,bottom=1pt,before skip=1pt, after skip=1pt] \STATE \textbf{Compute thread: }\\
      $\bDelta_t^i \leftarrow $ \texttt{LocalBlockTraining} ($\btheta_t^i, \eta_l, K$, $b_t$) 
      \end{tcolorbox}
        \begin{tcolorbox}[colframe=lightyellow, colback=lightyellow, boxsep=1pt, left=0pt, right=0pt, top=1pt, bottom=1pt, before skip=1pt, after skip=1pt]\STATE \textbf{Communication thread: }
      \IF{$t>0$ } 
      \STATE Client $i$ sends $\bDelta_{t-1}^i$ to the server and waits for the aggregated $\bDelta_{t-1}$ 
      \ENDIF
        \end{tcolorbox}
        \STATE \textbf{// When both threads finish, client $i$ processes: }\\
        updating local model: 
        $\btheta_{t+1}^i \leftarrow \btheta_t^i$, $ [\btheta_{t+1}^i]_{b_t} \leftarrow [\btheta_t^i]_{b_t} + \eta \bDelta_t^i$ \\
        \textbf{if} $t>0$ \textbf{then} \\
        \quad $[\btheta_{t+1}^i]_{b_{t-1}} \leftarrow [\btheta_t^i]_{b_{t-1}} + \eta \bDelta_{t-1} - \eta \bDelta_{t-1}^i$
      \STATE \textbf{Server} maintains the global model \\
      $\btheta_t \leftarrow \btheta_{t-1}, \quad [\btheta_t]_{b_{t-1}} = [\btheta_{t-1}]_{b_{t-1}} + \eta \bDelta_{t-1}$
      \ENDFOR
      \STATE Client $i$ send $\bDelta_{T-1}^i$ to the server and wait for the aggregated $\bDelta_{T-1}$
      \STATE Server updates the global model \\ $\btheta_{T} \leftarrow \btheta_{T-1}, \quad [\btheta_{T}]_{b_{T-1}} = [\btheta_{T-1}]_{b_{T-1}} + \eta \bDelta_{T-1}$
      \STATE Output $\btheta_{T}$
    \end{algorithmic}
\end{algorithm}

In the following, we will mathematically demonstrate how the correction in Line 9 addresses the inconsistency in local models and we provide an analysis of the relationship between the global and local models.

\textbf{Recursive derivation of local and global models}
We begin with round $t-1$, where client $i$ continues fine-tuning using the local model $\btheta_t^i$, where $[\btheta_t^i]_{b_{t-1}} \leftarrow [\btheta_{t-1}^i]_{b_{t-1}} + \eta \bDelta_{t-1}^i$. At this stage, the model parameters of block $b_{t-1}$ differ across all clients, as they have not yet finished the synchronizing the latest updates with the server. Then by global round $t$, as the synchronization between the server and clients regarding block $b_{t-1}$ is completed, clients can correct block $b_{t-1}$ in $\btheta_{t+1}^i$ using $\bDelta_{t-1} - \bDelta_{t-1}^i$:
\begin{align}\label{eq:local}
    [\btheta_{t+1}^i]_{b_{t-1}} & = [\btheta_t^i]_{b_{t-1}} + \eta (\bDelta_{t-1} - \bDelta_{t-1}^i) = [\btheta_{t-1}^i]_{b_{t-1}} + \eta \bDelta_{t-1} \notag\\
    & = \ldots = [\btheta_{0}^i]_{b_{t-1}} + \eta \sum_{s=0}^{t-1} \II_s(b_{t-1}) \bDelta_s = [\btheta_0]_{b_{t-1}} + \eta \sum_{s=0}^{t-1} \II_s(b_{t-1}) \bDelta_s,
\end{align}
where $\II_s(b_{t-1})$ is an indicator function that equals to 1 if block $b_{t-1}$ is assigned for round $s$, and 0 otherwise. This recursive derivation shows that the local model for block $b_{t-1}$ can be expressed as the initial global model $\btheta_0$ combined with all relevant global updates. Thus, the one-round delayed $\bDelta_{t-1}$ corrects the inconsistent parameters in block $b_{t-1}$ of local models. This ensures that the inconsistent parameters in block $b_{t-1}$ are effectively corrected after one global round, making sure that most of the parameters across local models align in a consistent direction for subsequent local training/fine-tuning. 

Similarly, for the global model $\btheta_t$, there is
\begin{align}\label{eq:global}
    [\btheta_t]_{b_{t-1}} 
    & = [\btheta_{t-1}]_{b_{t-1}} + \eta \bDelta_{t-1} = \ldots = [\btheta_0]_{b_{t-1}} + \eta \sum_{s=0}^{t-1} \II_s(b_{t-1}) \bDelta_s.
\end{align}
This indicates that the global model's block $b_{t-1}$ can also be recursively expressed as the initial global model with relevant updates. Based upon the result in Eq. \eqref{eq:local} and Eq. \eqref{eq:global}, the block $b_{t-1}$ of local model $\btheta_{t+1}^i$ exactly matches the block $b_{t-1}$ in global model $\btheta_t$. This demonstrates the effectiveness of the model correction mechanism in ParaBlock. 

\paragraph{Discussion about privacy protection}
While FL ensures data ownership protection (raw data stays local), it does not automatically prevent the model from learning and outputting sensitive patterns. This privacy concern is a known challenge in generative language models. ParaBlock primarily addresses the efficiency and communication bottlenecks of training LLMs in this distributed setting. Thus, ParaBlock would be orthogonal to advanced privacy protection techniques like data anonymization or differential privacy to minimize the risk of data privacy leakage. 

\section{Theoretical Analysis}\label{sec:thm}
In this section, we delve into the convergence guarantee of the proposed ParaBlock algorithm. We begin by outlining the key assumptions necessary for the analysis. Following this, we present the convergence rate and provide a detailed discussion of the results.

\begin{assumption}[Smoothness]\label{as:smooth}
The local objective function $f_i(\btheta)$ is $L$-smooth, i.e., $\forall \btheta_1,\btheta_2\in \RR^d$, 
\begin{align*}
    \|\nabla f_i(\btheta_1) - \nabla f_i(\btheta_2) \|\leq L\|\btheta_1-\btheta_2\|.
\end{align*}
\end{assumption}

\begin{assumption}[Bounded Variance]\label{as:bounded-v} 
    The stochastic gradient computed on local client is unbiased and has a bounded local variance, i.e., for all $\btheta$ and $i \in [N]$, we have
    $\EE \big[ \|\nabla f_i(\btheta;\xi)- \nabla f_i(\btheta)\|^2\big] \leq \sigma^2$,
and the loss functions has a global variance bound,
$\frac{1}{N}\sum_{i=1}^N \|\nabla f_i(\btheta)-\nabla f(\btheta)\|^2 \leq \sigma_g^2$.
\end{assumption}
Assumption \ref{as:smooth} and \ref{as:bounded-v} are common assumptions in analyzing federated non-convex optimization methods~\citep{li2019communication,yang2021achieving,reddi2021adaptive,wang2022communication,wang2023lightweight, wang2024fadas}. The global variance upper bound of $\sigma_g^2$ in Assumption~\ref{as:bounded-v} measures the data heterogeneity across clients, where $\sigma_g^2 = 0$ indicates i.i.d. data distribution across clients. 

In the following, we present the theoretical convergence analysis of ParaBlock. For clarity and fair comparison with existing analyses of FedBCD methods, we conduct the analysis under the local SGD optimizer. Extensions to local adaptive optimizers, along with \textit{additional discussions on the connection between general and block-wise properties} are provided in Appendix~\ref{sec:conv}.
\begin{theorem}\label{thm:main}
    Under Assumptions~\ref{as:smooth}--\ref{as:bounded-v}, let $T$ represent the total number of global rounds, $K$ be the number of local SGD training steps and $N$ be the number of the clients. If the learning rate $\eta$ and $\eta_l$ satisfy $\eta_l \leq \frac{1}{22KL}$ and $\eta \eta_l \leq \frac{1}{4KL}$ , then the global iterates $\{\btheta_t\}_{t=0}^{T-1}$ of Algorithm~\ref{alg:ParaBlock} satisfy
    \begin{align}
    & \frac{1}{T} \sum_{t=0}^{T-1} \EE[\|\nabla_{b_t} f(\btheta_t) \|^2] \leq \frac{8 \cF}{\eta \eta_l T K} + 40 \eta_l^2 L^2 K (\sigma^2 + 6K \sigma_g^2) \notag\\
    & \quad + \bigg(8 \eta^2 \eta_l L^2 K + \frac{\eta L}{2} \bigg) \frac{\eta_l}{N} \sigma^2 + 64 \eta_l^2 \eta^2 L^2 K [\sigma^2 + 10 \eta_l^2 L^2 K^2 (\sigma^2 + 6K \sigma_g^2)],
\end{align}
where $\cF = f(\btheta_0) - f_*$ and $f_* =\min_{\btheta} f(\btheta) > -\infty $.
    
\end{theorem}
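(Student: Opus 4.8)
The plan is to run a block-wise descent argument on the virtual global sequence $\{\btheta_t\}$ and telescope. The crucial structural observation is that, by the server update in Line~10, the transition $\btheta_t \to \btheta_{t+1}$ modifies \emph{only} block $b_t$, namely $[\btheta_{t+1}]_{b_t} = [\btheta_t]_{b_t} + \eta\bDelta_t$ with $\bDelta_t = \frac{1}{N}\sum_i\bDelta_t^i = -\frac{\eta_l}{N}\sum_{i=1}^N\sum_{k=0}^{K-1}[\bg_k^i]_{b_t}$. Since each $f_i$ is $L$-smooth (Assumption~\ref{as:smooth}), so is $f$, and applying the descent lemma to this single-block update yields
\begin{align*}
\EE[f(\btheta_{t+1})] \leq \EE[f(\btheta_t)] + \eta\,\EE\langle\nabla_{b_t}f(\btheta_t),\bDelta_t\rangle + \tfrac{L\eta^2}{2}\EE\|\bDelta_t\|^2 .
\end{align*}
This aligns the gradient block $\nabla_{b_t}f(\btheta_t)$ with the block actually being updated, which is exactly the quantity on the left-hand side of the theorem.

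Next I would control the inner product. Taking conditional expectation over the stochastic gradients and writing $\bar{\bg}_t := \frac{1}{NK}\sum_{i,k}[\nabla f_i(\btheta_k^i)]_{b_t}$, one has $\EE[\bDelta_t] = -\eta_l K\,\bar{\bg}_t$. The polarization identity $-\langle a,b\rangle = \tfrac12(\|a-b\|^2 - \|a\|^2 - \|b\|^2)$ then produces the descent term $-\tfrac{\eta\eta_l K}{2}\EE\|\nabla_{b_t}f(\btheta_t)\|^2$, an extra negative term $-\tfrac{\eta\eta_l K}{2}\EE\|\bar{\bg}_t\|^2$ that I would keep in reserve, and a residual $\tfrac{\eta\eta_l K}{2}\EE\|\nabla_{b_t}f(\btheta_t)-\bar{\bg}_t\|^2$. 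Using $\nabla_{b_t}f(\btheta_t)=\frac1N\sum_i[\nabla f_i(\btheta_t)]_{b_t}$, Jensen, and $L$-smoothness, the residual is bounded by $\frac{L^2}{NK}\sum_{i,k}\EE\|\btheta_k^i-\btheta_t\|^2$, so the whole argument reduces to controlling the deviation of each local iterate $\btheta_k^i$ from the global model $\btheta_t$.

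The main obstacle, and the genuinely new ingredient relative to standard FedBCD analysis, is that $\btheta_k^i$ departs from $\btheta_t$ for two distinct reasons, which I would separate via $\EE\|\btheta_k^i-\btheta_t\|^2 \leq 2\EE\|\btheta_k^i-\btheta_t^i\|^2 + 2\EE\|\btheta_t^i-\btheta_t\|^2$. The first term is the usual local-SGD client drift confined to block $b_t$; unrolling $[\btheta_{k+1}^i]_{b_t}=[\btheta_k^i]_{b_t}-\eta_l[\bg_k^i]_{b_t}$ and invoking Assumption~\ref{as:bounded-v} gives the familiar $\cO(\eta_l^2 K(\sigma^2+K\sigma_g^2))$ bound, which supplies the $40\eta_l^2 L^2 K(\sigma^2+6K\sigma_g^2)$ contribution. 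The second term is the \emph{one-round staleness} introduced by the parallel design: the recursive identities \eqref{eq:local}--\eqref{eq:global} certify that the correction in Line~9 forces every already-trained block to agree \emph{exactly} with the global model, so the only unsynchronized block at the start of round $t$ is $b_{t-1}$, and there $\btheta_t^i-\btheta_t=\eta(\bDelta_{t-1}^i-\bDelta_{t-1})$. Crucially, because the identities show the discrepancy is confined to a single block and does not accumulate across rounds, $\EE\|\btheta_t^i-\btheta_t\|^2=\eta^2\EE\|\bDelta_{t-1}^i-\bDelta_{t-1}\|^2$ is just the deviation of a local update from its mean; expanding $\bDelta_{t-1}^i$ and applying the variance bounds produces the $\eta^2$-scaled staleness terms of the theorem.

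Finally I would assemble the pieces. Expanding $\tfrac{L\eta^2}{2}\EE\|\bDelta_t\|^2$ splits into a variance part (yielding the $\frac{\eta L}{2}\frac{\eta_l}{N}\sigma^2$-type term) and a part proportional to $\|\bar{\bg}_t\|^2$, the latter absorbed by the reserved $-\tfrac{\eta\eta_l K}{2}\EE\|\bar{\bg}_t\|^2$ precisely under $\eta\eta_l\le\frac{1}{4KL}$; reinserting the drift bounds into $\|\bar\bg_t\|^2$ then leaves only a small positive multiple of $\|\nabla_{b_t}f(\btheta_t)\|^2$, which the conditions $\eta_l\le\frac{1}{22KL}$ and $\eta\eta_l\le\frac{1}{4KL}$ ensure is dominated by the $-\tfrac{\eta\eta_l K}{2}$ descent term, leaving a net negative coefficient of at least $\tfrac{\eta\eta_l K}{8}$. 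Telescoping over $t=1,\dots,T$, using $f(\btheta_1)-f(\btheta_{T+1})\le\cF$ (the Line~12--14 aggregation making the endpoint $\btheta_{T+1}$ well defined despite the global staleness), dividing by $\tfrac{\eta\eta_l KT}{8}$, and collecting the accumulated variance and staleness terms then gives the stated bound.
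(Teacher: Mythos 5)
Your proposal is correct and follows essentially the same route as the paper's own proof: the block-wise descent lemma on the global iterates, the polarization identity yielding the $-\tfrac{\eta\eta_l K}{2}\EE\|\nabla_{b_t}f(\btheta_t)\|^2$ descent term plus a reserved negative term, the drift-plus-staleness decomposition $\EE\|\btheta_{t,k}^i-\btheta_t\|^2 \le 2\EE\|\btheta_{t,k}^i-\btheta_{t,0}^i\|^2 + 2\eta^2\EE\|\bar\bDelta_{t-1}^i-\bar\bDelta_{t-1}\|^2$ with the staleness confined to block $b_{t-1}$, and absorption of the $\|\bDelta_t\|^2$ and leftover gradient terms under the stated learning-rate conditions before telescoping. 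The only cosmetic difference is that the paper packages the control of $\sum_t \tfrac1N\sum_i\EE\|\bDelta_t^i\|^2$ (which contains gradient-norm terms from the previous round, not pure variance) into a separate recursion lemma, a bookkeeping step your assembly paragraph performs implicitly.
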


\begin{corollary}\label{cor:main}
    If we choose the global learning rate  $\eta = \Theta (\sqrt{KN})$ and $\eta_l = \Theta\big( \frac{1}{\sqrt{T}K} \big)$ in Theorem \ref{thm:main}, then for sufficiently large $T$, the global iterates $\{\btheta_t\}_{t=0}^{T-1}$ of Algorithm~\ref{alg:ParaBlock} satisfy 
    \begin{align}\label{eq:rate}
    & \frac{1}{T} \sum_{t=0}^{T-1}  \EE[\|\nabla_{b_t} f(\btheta_t)\|^2] \leq \cO\bigg(\frac{\cF + \sigma^2}{\sqrt{TKN}} + \frac{N\sigma^2 + \sigma_g^2}{T} \bigg).
    \end{align}%
\end{corollary}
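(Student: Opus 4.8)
The plan is to substitute the prescribed learning rates $\eta = \Theta(\sqrt{KN})$ and $\eta_l = \Theta(1/(\sqrt{T}K))$ directly into the four-term bound of Theorem~\ref{thm:main}, track the order of each resulting term (treating $L$ as an absorbed constant, as in the $\cO$ notation of the corollary), and retain only the dominant contributions. The entire argument is a bookkeeping exercise: no new probabilistic or optimization ideas are needed beyond Theorem~\ref{thm:main} itself.

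First I would check that the chosen rates satisfy the feasibility constraints $\eta_l \leq \frac{1}{22KL}$ and $\eta\eta_l \leq \frac{1}{4KL}$ required by the theorem. Under the stated choices one has $\eta_l = \Theta(1/(\sqrt{T}K))$ and $\eta\eta_l = \Theta(\sqrt{N}/\sqrt{TK})$, so both constraints reduce to lower bounds on $T$ (roughly $T \gtrsim L^2$ and $T \gtrsim NKL^2$ respectively). These are precisely what is meant by ``sufficiently large $T$'', so I would record this threshold explicitly at the outset.

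Next I would evaluate each of the four terms. For the optimization term, $\frac{8\cF}{\eta\eta_l TK} = \Theta(\cF/\sqrt{TKN})$, since $\eta\eta_l TK = \Theta(\sqrt{TKN})$. For the drift term $40\eta_l^2 L^2 K(\sigma^2 + 6K\sigma_g^2)$, substitution gives $\Theta(\sigma^2/(TK) + \sigma_g^2/T)$; the first piece is absorbed into $\sigma^2/\sqrt{TKN}$ whenever $T \gtrsim N/K$, while the second is precisely the $\sigma_g^2/T$ contribution. The variance term $\big(8\eta^2\eta_l L^2 K + \tfrac{\eta L}{2}\big)\tfrac{\eta_l}{N}\sigma^2$ splits into a piece of order $\sigma^2/T$ (dominated by $N\sigma^2/T$) and a piece of order $\sigma^2/\sqrt{TKN}$, which supplies the remaining leading term. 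Finally, $64\eta^2\eta_l^2 L^2 K[\sigma^2 + 10\eta_l^2 L^2 K(\sigma^2 + K\sigma_g^2)]$ yields a leading piece of order $N\sigma^2/T$ together with higher-order $\Theta(1/T^2)$ remainders of order $N\sigma^2/(T^2K)$ and $N\sigma_g^2/T^2$.

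The only genuinely careful step — and where the ``sufficiently large $T$'' hypothesis does real work — is verifying the dominance relations among the $\Theta(1/T)$ and $\Theta(1/T^2)$ pieces. In particular, the cross terms $N\sigma^2/(T^2K)$ and $N\sigma_g^2/T^2$ must be shown dominated by $N\sigma^2/T$ and $\sigma_g^2/T$ respectively, which holds once $T \gtrsim N$; combined with the feasibility thresholds above, this pins down the precise meaning of ``sufficiently large''. Collecting the four surviving leading terms $\cF/\sqrt{TKN}$, $\sigma^2/\sqrt{TKN}$, $N\sigma^2/T$, and $\sigma_g^2/T$, and grouping them, gives exactly the claimed rate $\cO\big((\cF + \sigma^2)/\sqrt{TKN} + (N\sigma^2 + \sigma_g^2)/T\big)$.
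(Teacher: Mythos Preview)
Your proposal is correct and follows exactly the intended route: the paper does not spell out a separate proof of the corollary, but the derivation is precisely the term-by-term substitution of $\eta = \Theta(\sqrt{KN})$ and $\eta_l = \Theta(1/(\sqrt{T}K))$ into the bound of Theorem~\ref{thm:main} that you carry out. Your explicit tracking of the feasibility thresholds and the dominance conditions (in particular $T \gtrsim N$ for the $N\sigma_g^2/T^2$ remainder) is in fact more careful than anything the paper records.
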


\begin{remark}
    Compared with the existing federated BCD methods such as FedBCD~\citep{liu2019communication, wu2021federated} and FedBCGD~\citep{liu2024fedbcgd}, our proposed method obtains the same $\cO(1/\sqrt{T})$ convergence rate under general non-convex settings. This demonstrates that the one round staleness introduced by the communication-computation parallel thread in ParaBlock does not compromise its overall convergence guarantee.
\end{remark}

\section{Experiments}
In general, the proposed method leverages local block updates combined with a communication-computation parallel scheme during fine-tuning. This section presents a series of experiments to evaluate both the performance and efficiency of our approach. First, we assess the performance of ParaBlock and compare it against several existing federated fine-tuning algorithms. Next, we analyze the time efficiency of ParaBlock under varying network conditions and computational demands. Finally, we perform ablation studies to gain deeper insights into the key factors influencing the effectiveness of ParaBlock.

\subsection{Experimental settings}
We summarize some crucial implementation details in the following, and we leave some additional results and experiment details to Appendix~\ref{sec:add_exp}.

\textbf{Datasets, models, and evaluations}
We utilize the Alpaca-GPT4 dataset~\citep{peng2023instruction} for general instruction following tasks, and we use 50000 data samples sampled from the MathInstruct dataset~\citep{yue2023mammoth} for mathematical reasoning task. We fine-tune two models, Llama 3-8B~\citep{dubey2024llama,llama3modelcard} and a lightweight Llama 3.2-3B~\citep{llama3modelcard} designed for on-device use. To assess the performance when fine-tuning on instruction following task, we utilize MT-Bench~\citep{zheng2023judging} with GPT-4o as a judge model. For evaluating the performance on mathematical reasoning, we employ the widely used OpenLLM Leaderboard~\citep{open-llm-leaderboard} as the evaluation benchmark and we report the evaluation score on GSM8K~\citep{cobbe2021training} to show the math problem solving capability. 

\begin{table*}[ht!] 
    \centering
    \small
    \vspace{-5pt}
    \caption{Fine-tune Llama 3-8B and Llama 3.2-3B \citep{llama3modelcard} on Alpaca-GPT4 dataset \citep{peng2023instruction} and MathInstruct dataset \citep{yue2023mammoth}. In our results, we highlight the best score in \textbf{bold} and the second-best score with an \underline{underline}.}
    \vspace{5pt}
    \setlength{\tabcolsep}{3pt}
    \begin{tabular}{l|cccc|cccc}
    \toprule
        \multirow{3}{*}{Method} & \multicolumn{4}{c|}{ Alpaca-GPT4} & \multicolumn{4}{c}{Math Instruct} \\
        \cmidrule{2-9} 
        & \multicolumn{2}{c}{ Llama 3-8B} & \multicolumn{2}{c|}{ Llama 3.2-3B} & \multicolumn{2}{c}{ Llama 3-8B} & \multicolumn{2}{c}{ Llama 3.2-3B} \\
        & MT-Bench$\uparrow$ & RT(m) $\downarrow$ & MT-Bench$\uparrow$ & RT(m) $\downarrow$ & GSM8K$\uparrow$ & RT(m) $\downarrow$ & GSM8K$\uparrow$ & RT(m) $\downarrow$ \\
    \midrule
        Base & 4.72 & - & 4.18 & - & 51.55 & - & 27.98 & - \\
        Fed full FT & \textbf{5.33} & \textbf{16.0} & \underline{4.36} & \underline{12.5} & \underline{55.80} &  \textbf{15.5}&  \textbf{32.22} & \underline{12.3} \\
    \midrule
        FedIT & 5.11 & 34.5 & 4.31 & 23.8 & 54.60 & 23.1 & 29.87 & 15.4 \\
        FFA-LoRA & 5.08 & 30.2 & 4.25 & 23.3 & 54.59 & 21.2 & 30.55 & 14.9 \\
        FLoRA & 4.95 & 65.4 & 4.23 & 34.0 & 52.54 & 64.9 & 28.51 & 29.3 \\
        FedCyBGD & 4.87 & 59.9 & 4.29 & 57.8 & 51.40 & 63.2 & 27.60 & 53.3 \\
        FedBCD & \underline{5.14} & 30.2 & 4.33 & 19.1 & 54.74 & 24.9 & \underline{31.84} & 17.3 \\
        ParaBlock & \underline{5.14} & \underline{21.1} & \textbf{4.40} & \textbf{11.9} & \textbf{55.88} & \underline{15.8} & 31.77 & \textbf{10.1} \\
    \bottomrule
    \end{tabular}
    \vspace{-10pt}
    \label{tab:main}
\end{table*}

\textbf{Baselines}
We compare the ParaBlock with several federated fine-tuning baselines including 1) Federated full model fine-tuning (Fed full FT), 2) FedIT~\citep{zhang2024towards}, which is one of the most commonly used federated fine-tuning methods that integrates LoRA~\citep{hu2021lora} into standard FedAvg~\citep{mcmahan2017communication} method, 3) FFA-LoRA~\citep{sun2024improving}, which fixes the LoRA matrix $\mathbf{B}$ and fine-tunes the LoRA matrix $\mathbf{A}$ to reduce server aggregation bias, 4) FLoRA~\citep{wang2024flora}, a recent method for federated fine-tuning with LoRA, 5) FedCyBGD~\citep{wang2024save}, which employs the cyclic update for block coordinate federated fine-tuning, and 6) FedBCD, a standard federated BCD scheme that directly integrates local BCD updates into the original FedAvg aggregation schemes. We provide the GPU consumption of all baselines in Table~\ref{tab:memory_comparison} in Appendix~\ref{sec:add_exp}. 

\textbf{Implementation details}
We implement federated fine-tuning of LLMs by setting up an FL framework with 10 clients, each assigned a local dataset. Notably, we perform heterogeneous data partitioning for both datasets. For the Alpaca-GPT4 dataset, we adopt a text clustering method similar to the one used in \cite{lin2021fednlp} to get a cluster label. For the MathInstruct dataset, we use the ``source” from the original dataset as a label and follow traditional data partitioning using a Dirichlet distribution as described in \cite{wang2020tackling,Wang2020Federated}. Specifically, we adopt Dirichlet(0.1) for the Alpaca-GPT4 dataset and Dirichlet(0.6) for the MathInstruct dataset. Regarding LoRA-related methods, the LoRA rank is set to 32 for FedIT, FFA-LoRA and FLoRA. For FedCyBGD, the model is partitioned into 10 blocks, with each block assigned to a corresponding client, and clients perform fine-tuning sequentially. For FedBCD and the proposed ParaBlock, the model is partitioned into 16 blocks for Llama 3-8B and 14 blocks for Llama 3.2-3B. The block partition is based on the default layer of the language model. During each global round, the server randomly selects one block for fine-tuning. We conduct 32 global rounds for fine-tuning Llama 3-8B and 28 rounds for fine-tuning Llama 3.2-3B for all BCD-based and LoRA-based baselines, and we conduct 3 global rounds for Fed full FT. We adopt AdamW as the local optimizer, i.e., conducting local block training via AdamW, as it is the default optimizer for most of LLMs training and fine-tuning. The default effective batch size in our experiment is set to 4. The random seeds for all libraries is 42. All experiments are conducted on NVIDIA A100 GPUs. To fully support reproducibility, our code will be released later.


\subsection{Main results}
\paragraph{Results on general instruction following task}
We begin by evaluating the performance of the proposed ParaBlock on the general instruction-following dataset, Alpaca GPT-4 \citep{peng2023instruction}, for language model fine-tuning. We report the MT-bench score for evaluation. As shown in the results for Alpaca-GPT4 in Table~\ref{tab:main}, ParaBlock achieves a lower score than Fed full FT but consistently outperforms most LoRA-based and BCD-based PEFT methods across two models. For the Llama 3-8B model, ParaBlock achieves an MT-bench score of 5.14, which is the same score as the FedBCD, and ParaBlock shows substantial improvement over FedCyBGD, another federated block-coordinate method. Additionally, ParaBlock significantly outperforms LoRA-based FL methods such as FedIT, FFA-LoRA and FLoRA. A key highlight of ParaBlock is its \textit{superior runtime efficiency}, as it requires outstanding less runtime compared to other baselines, particularly LoRA-based FL methods, while achieving notable performance improvements. Similarly, when fine-tuning the lightweight Llama 3.2-3B model, ParaBlock achieves the best performance among all baselines. Its ability to surpass competing methods in both performance and time efficiency highlights the effectiveness of ParaBlock in fine-tuning LLMs for general instruction-following tasks.

\vspace{-5pt}
\paragraph{Results on mathematical reasoning task} We evaluate ParaBlock on mathematical reasoning tasks using the MathInstruct \citep{yue2023mammoth} dataset, with GSM8K~\citep{cobbe2021training} as the benchmark for evaluation. As shown in the right main columns of Table~\ref{tab:main}, ParaBlock outperforms all baseline methods and achieves the second-best runtime on the 8B model, and outperforms most baselines with the less runtime on the 3B model. For 8B model, ParaBlock demonstrates improvements over FedBCD, while for 3B model, it is slightly less accurate than FedBCD. This indicates that while the communication thread of ParaBlock exists one round behind, it does not compromise overall performance. ParaBlock also consistently exhibits strong performance compared to other baselines, with a notable 11.7\% improvement over FedCyBGD when fine-tuning the lightweight 3B model. Furthermore, ParaBlock keeps its advantage of runtime saving among all baselines, highlighting its ability to reduce communication latency for federated BCD schemes and achieve communication efficiency in fine-tuning LLMs. 

\vspace{-10pt}
\paragraph{Time efficiency}

We conduct a detailed comparison of the runtime and emphasize the time efficiency benefits achieved by leveraging communication-computation parallelism, as illustrated in Figure~\ref{fig:time_efficiency}. To evaluate the necessity and benefits of this scheme, we measure wall-clock time across different network bandwidth conditions (50M/s, 100M/s, and 150M/s) and varying effective batch sizes (2, 4, and 8) on each client. \footnote{Due to deployment constraints, we can only simulate the communication bandwidth with three network bandwidth conditions, which were frequent settings in FL and decentralized learning.} Among the LoRA-based federated fine-tuning methods, FFA-LoRA requires slightly less runtime than FedIT, while FLoRA incurs significantly more runtime than FedIT. Due to space limitations, we include only the detailed runtime comparison results for FedIT here, with a comprehensive discussion of all methods provided in Appendix~\ref{sec:add_exp}. 

\begin{figure}[ht!]
    \centering
    \subfigure[Llama 3-8B]{\includegraphics[width=0.33\linewidth]{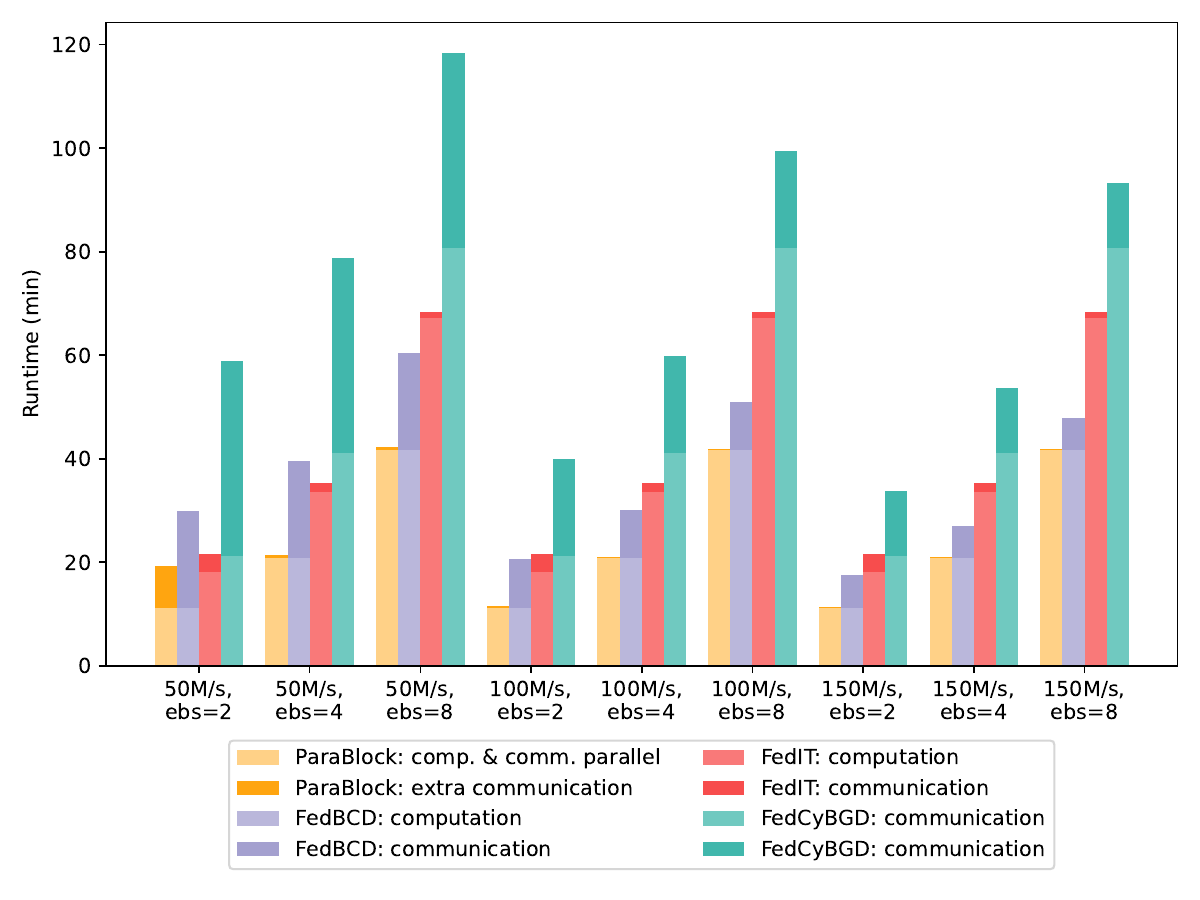}}
    \subfigure[Llama 3.2-3B]{\includegraphics[width=0.33\linewidth]{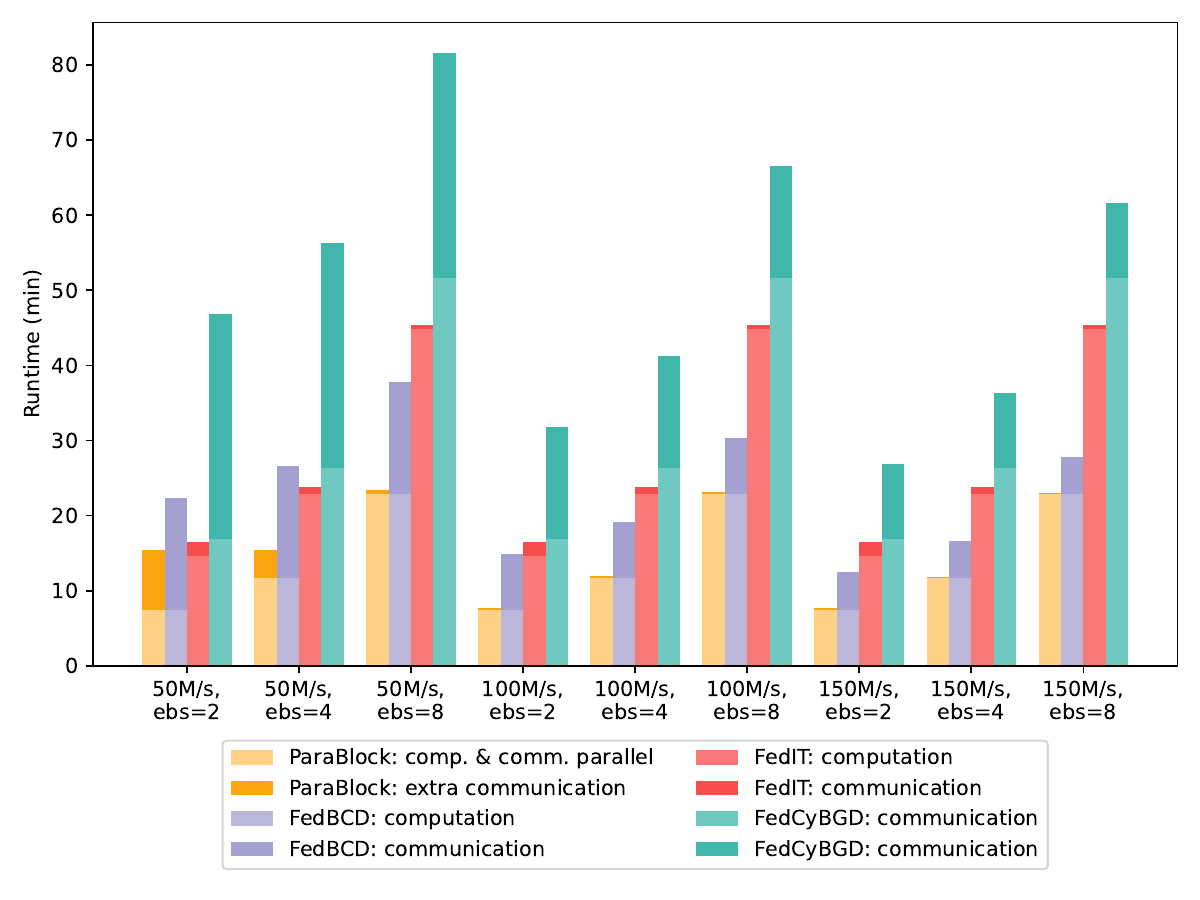}}
    \subfigure[Training loss comparison]{\includegraphics[width=0.3\linewidth]{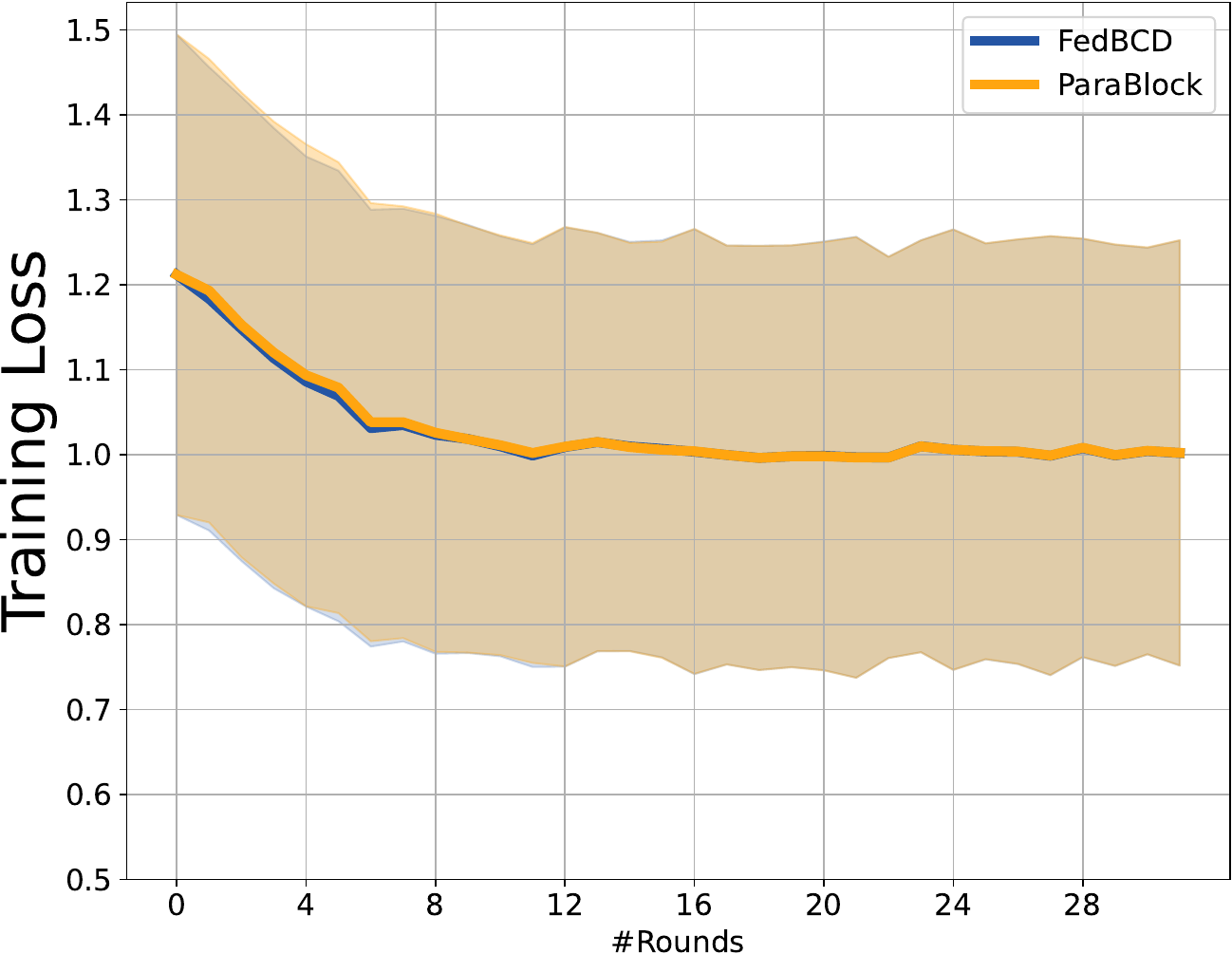}}
    \vspace{-10pt}
    \caption{Time efficiency: wall-clock runtime for various network communication bandwidths and effective batch sizes. }
    \vspace{-10pt}
    \label{fig:time_efficiency}
\end{figure}

Figure \ref{fig:time_efficiency}(a) illustrates the runtime when fine-tuning  Llama 3-8B model using Alpaca GPT-4 dataset under various conditions. In scenarios with low network bandwidth and minimal client-side computation, such as a bandwidth of 50M/s and an effective batch size of 2, the communication time cannot be fully overlapped by client computation. This results a significantly extra communication overhead for ParaBlock, represented by the dark yellow portion of the histogram in Figure \ref{fig:time_efficiency}(a). As the effective batch size grows, requiring more client-side computation, a greater portion of the computation time can be overlapped with communication. We notice that for faster communication networks (100M/s and 150M/s), the computation time for ParaBlock does not introduce extra communication during fine-tuning. The negligible dark yellow portion in Figure~\ref{fig:time_efficiency}(a) represents the final communication process, as described in Line 12 of Algorithm~\ref{alg:ParaBlock}.

Figure~\ref{fig:time_efficiency}(a) also demonstrates that ParaBlock significantly reduces runtime compared to other baselines. Notably, we emphasize the time savings achieved over block-coordinate baselines, including vanilla FedBCD and FedCyBGD. The overall runtime reduction can exceed 30\% for network bandwidths of 50M/s and 100M/s with an effective batch size (ebs) of 2, with efficiency gains over FedCyBGD being particularly pronounced. It is important to note that the higher runtime costs for FedCyBGD can be attributed to several factors. First, FedCyBGD employs a cyclic update approach that selects only one client at a time, inherently prolonging the fine-tuning process. Additionally, FedCyBGD introduces extra communication overhead, as all clients must periodically synchronize to receive model updates. Thus FedCyBGD's design results in increased latency and contributes to the overall longer runtime observed in our evaluations.

The runtime savings over the LoRA-based FedIT are also notable. Figure~\ref{fig:time_efficiency}(a) indicates that ParaBlock achieves comparable, and slightly better, time efficiency than FedIT under conditions of low computational cost and high communication overhead (e.g., 50M/s network and ebs=2). This is primarily due to the additional communication time associated with ParaBlock. Moreover, under most other settings, ParaBlock consistently reduces runtime by approximately 40\% compared to FedIT. This improvement is attributed to the communication-computation parallelism of ParaBlock and the computational efficiency of the block coordinate method.

Figure~\ref{fig:time_efficiency}(b) illustrates the fine-tuning runtime of the Llama 3.2-3B model under conditions similar to those previously described. In addition to highlighting the superior time efficiency of ParaBlock, we observe that parameter communication constitutes a larger proportion of the total runtime during the fine-tuning of the Llama 3.2-3B model. This is reflected in the increased size of the dark yellow portion in Figure~\ref{fig:time_efficiency}(b), which represents a greater share of the overall runtime. These underscore the critical need to reduce communication latency for clients. By mitigating the impact of increased communication overhead, ParaBlock achieves even greater time savings. These results further demonstrate the effectiveness of ParaBlock in enhancing fine-tuning efficiency by minimizing the influence of communication delays.

We verify the convergence of the proposed ParaBlock through the training loss as shown in Figure~\ref{fig:time_efficiency}(c). From an optimization perspective, ParaBlock and FedBCD exhibit very similar convergence behavior, with ParaBlock showing only a marginally slower convergence than FedBCD in the initial stages, while this gap diminishes significantly in later rounds. Note that the primary distinction between FedBCD and ParaBlock lies in the global model of ParaBlock, which introduces one-step staleness. Nevertheless, the results from Figure~\ref{fig:time_efficiency}(c) indicate that this staleness has a minimal impact on the fine-tuning convergence of ParaBlock. 

\subsection{Ablation studies}We analyze several aspects of the proposed ParaBlock, including: 1) How many blocks should ParaBlock use to balance utility and time efficiency? 2) Is there any block scheduling strategy can achieve better fine-tuning results? 3) How does the data distribution among clients impact the fine-tuning performance? We further study staleness beyond one round, extend the analysis beyond LLaMA architectures, and evaluate under cross-silo partial participation. Additional results are provided in Appendix~\ref{sec:add_exp}.

\begin{wrapfigure}{r}{0.4\textwidth}
\begin{minipage}{\linewidth}
\begin{table}[H]
    \centering
    \small
    \caption{Ablation for the block assignment the number of layers.}
    \vspace{5pt}
    \setlength{\tabcolsep}{1pt}
    \begin{tabular}{l|cccc}
    \toprule
        Models & MT-B$\uparrow$ & RT(m)$\downarrow$ & GSM8K$\uparrow$ & RT(m)$\downarrow$ \\
        \midrule
        Partial layer & 5.12 & \textbf{20.3} & 53.22 & \textbf{15.0} \\
        1 layer & 5.08 & \underline{20.4} & 53.90 & \underline{15.1} \\
        2 layers & \textbf{5.14} & 21.1 & \textbf{55.88} & 15.8 \\
        4 layers & \underline{5.13} & 22.7 & \underline{55.04} & 19.4\\
    \bottomrule
    \end{tabular}
    \vspace{-5pt}
    \label{tab:abla_layer}
\end{table}
\end{minipage}
\end{wrapfigure}

%


\paragraph{Ablation for block partition }In our experiments, as previously mentioned, we partition blocks based on the default layers in language models. To investigate the impact of block partitioning, we explore different configurations by varying the number of layers within each block in the proposed ParaBlock approach. As shown in Table~\ref{tab:abla_layer}, we evaluate configurations where each block contains partial, 1, 2, or 4 layers during fine-tuning of the Llama 3-8B model on both general and math tasks. The results in Table~\ref{tab:abla_layer} indicate that the runtime gets longer as there are more layers assigned to one block. Note that the increase in runtime encompasses the growth of both computation and communication runtime, and we notice that assigning 2 layers per block yields slightly better performance compared to other configurations in both tasks.

\paragraph{Ablation for block scheduling} We investigate how various block partitions would impact on the overall performance. We compare the default random scheduling which randomly selects two layers based on the default layers in Llama 3-8B, the sequentially, reverse sequentially layer scheduling and a gradient-guided scheduling based on the original gradient. As shown in Table~\ref{tab:abla_schedule}, we find that Random scheduling can achieve the best result in math reasoning but performs slightly worse than gradient-based in general instruction tuning. However, the gradient-based approach incurs extra computational cost, as it requires a full-model backward pass prior to fine-tuning in order to determine the scheduling.
\begin{wrapfigure}{r}{0.4\textwidth}
\begin{minipage}{\linewidth}
\begin{table}[H]
\small
    \centering
    \caption{Ablation for block scheduling}
    \vspace{5pt}
    \begin{tabular}{l|cc}
    \toprule
        Models & MT-B$\uparrow$ & GSM8K$\uparrow$ \\
        \midrule
        Random & 5.14 & \textbf{55.88}\\
        Seq. & 5.03 & 54.21 \\
        Rev. seq. & 5.06 & 54.59 \\
        Gradient based & \textbf{5.19} & 53.60 \\
    \bottomrule
    \end{tabular}
    \label{tab:abla_schedule}
\end{table}
\end{minipage}
\end{wrapfigure}

\begin{wrapfigure}{r}{0.4\textwidth}
\begin{minipage}{\linewidth}
\begin{table}[H]
    \centering
    \caption{The results for fine-tuning LLaMA 3-8B on two datasets, considering i.i.d., non-i.i.d. and extreme non-i.i.d.  partitions.}
    \begin{tabular}{l|cc}
    \toprule
    Data distribution & MT-B$\uparrow$ & GSM8K$\uparrow$ \\
    \midrule
        i.i.d. & \textbf{5.21} & \textbf{56.14} \\
        Non-i.i.d. & 5.14 & 55.88 \\
        Extreme non-i.i.d. & 5.13 & 54.66 \\
    \bottomrule
    \end{tabular}
    \label{tab:noniid}
\end{table}
\end{minipage}
\end{wrapfigure}

\paragraph{Ablation for heterogeneous distribution on clients}
We compare the i.i.d. and non-i.i.d. data partitions when fine-tuning on the general instruction dataset Alpaca-GPT4~\citep{peng2023instruction} and math reasoning dataset Math Instruct~\citep{yue2023mammoth}, as shown in Table~\ref{tab:noniid}. We further conduct experiments under an extreme non-i.i.d. setting with Dirichlet(0.01) applied to both datasets. Our observations indicate that i.i.d. data sampling consistently results in higher evaluation scores for both general instruction tuning and mathematical reasoning tasks.

\paragraph{Ablation for the number of staleness rounds}
The one-round staleness in the original ParaBlock can be readily extended to multi-round staleness settings. For example, if we assume that all clients in the network experience lower communication bandwidth, leading to communication-computation parallelism with two rounds of staleness, we obtain the following results as shown in Table~\ref{tab:staleness} . It shows a slight performance degradation when all clients incur two-round staleness; however, the performance remains comparable to the base model score.
\begin{wrapfigure}{r}{0.4\textwidth}
\vspace{-30pt}
\begin{minipage}{\linewidth}
\begin{table}[H]
    \centering
    \caption{Ablation for the number of staleness rounds.}
    \vspace{2pt}
    \begin{tabular}{l|cc}
    \toprule
    Method & GSM8K$\uparrow$ & RT(m)$\downarrow$\\
    \midrule
        Base & 51.55 & - \\
        one-round staleness & \textbf{55.88} & \textbf{15.8} \\
        two-round staleness & 54.74 & 16.1 \\
    \bottomrule
    \end{tabular}
    \label{tab:staleness}
\end{table}
\end{minipage}
\end{wrapfigure}

\paragraph{Experiments beyond Llama model architectures}
We conduct mathematical reasoning experiments using the same MathInstruct dataset as in the main experiments with the Qwen-2.5-1.5B-Instruct model~\citep{qwen2.5}, and evaluate the fine-tuned models on GSM8K. The results in Table~\ref{tab:qwen} demonstrate that ParaBlock consistently outperforms other baselines in terms of accuracy, while also achieving the second-best runtime, highlighting the effectiveness of our design.

\begin{table}[H]
\centering
\begin{minipage}{0.4\textwidth}
    \centering
    \caption{Fine tune Qwen-2.5-1.5B-Instruct model on MathInstruct dataset. We highlight the best score in \textbf{bold} and the second-best score with an \underline{underline}. }
    \vspace{2pt}
    \begin{tabular}{l|cc}
    \toprule
    Method & GSM8K$\uparrow$ & RT(m)$\downarrow$ \\
    \midrule
    Base & 54.28 & -- \\
    Fed full FT & 60.58 & \textbf{9.3} \\ 
    \midrule
    FedIT & \underline{62.32} & 19.76 \\
    FFA-LoRA & 61.56 & 17.12 \\
    FLoRA & 57.77 & 52.42 \\
    FedCyBGD & 54.51 & 42.01 \\
    FedBCD & 61.79 & 18.48 \\
    ParaBlock & \textbf{62.70} & \underline{10.56} \\
    \bottomrule
    \end{tabular}
    \label{tab:qwen}
\end{minipage}
\hfill
\begin{minipage}{0.55\textwidth}
    \centering
    \caption{Fine tune Llama 3-8B model on MathInstruct dataset with 50 clients cross-silo settings.}
    \vspace{2pt}
    \begin{tabular}{l|cccc}
    \toprule
    Method & MT-B $\uparrow$ & RT(m)$\downarrow$ & GSM8K$\uparrow$ & RT(m)$\downarrow$ \\
    \midrule
        FedIT & 5.06 & 49.31 & 53.75 & 44.15 \\
        FedBCD & 5.08 & 55.36 & 53.60 & 52.16 \\
        ParaBlock & \textbf{5.10} & \textbf{28.45} & \textbf{53.90} & \textbf{28.44} \\
    \bottomrule
    \end{tabular}
    \label{tab:c_silo}
\end{minipage}
\end{table}

\paragraph{Extension to cross-silo partial participation settings} ParaBlock is fully compatible with partial participation, and we have conducted experiments demonstrating its effectiveness under such settings. Specifically, we consider a cross-silo setup where 20\% of 50 clients are randomly selected in each round, and we fine-tune the Llama-3 8B model on Alpaca-GPT4 and Math Instruct dataset. We compare this cross-silo ParaBlock with FedBCD and FedIT. As shown in Table~\ref{tab:c_silo}, ParaBlock outperforms FedIT and FedBCD while retaining its computation time-saving advantage.

\paragraph{Extension to heterogeneous communication bandwidth} The proposed ParaBlock is natural to extend to heterogeneous communication bandwidth. For example, assume one client was significantly slower than the others. This extremely slow client incurs a three-round staleness, while other clients maintain the original one round of staleness. The results in Table~\ref{tab:adapted} illustrate this scenario. Note that in this heterogeneous bandwidth setting, slow clients with significant latency may degrade overall performance. We believe it requires a non-trivial design for better performance and conclude this as future work. 

\begin{table}[H]
    \centering
    \caption{Fine tune Llama 3-8B model on MathInstruct dataset with heterogeneous communication bandwidth.}
    \vspace{2pt}
    \begin{tabular}{l|cc}
    \toprule
    Method & GSM8K$\uparrow$ & RT(m)$\downarrow$ \\
    \midrule
        homogeneous bandwidth (original) & 55.88 & 38.72 \\
        heterogeneous bandwidth & 52.46 & 20.16 \\
    \bottomrule
    \end{tabular}
    \label{tab:adapted}
\end{table}

\paragraph{Discussion on the comparison with asynchronous FL baselines} ParaBlock is significantly different from asynchronous FL, although both share the motivation of addressing communication delays and efficiency. The efficiency in asynchronous FL comes from letting clients update the server at their own pace, so it may happen that one client is conducting communication while another is computing gradients. ParaBlock, however, sets up a two-thread parallel scheme for every single client. Thus, it may not be fair to directly compare asynchronous FL approaches with ParaBlock, as they are designed for full-parameter training. Nevertheless, we compare one representative asynchronous FL baseline, FedBuff~\citep{nguyen2022federated}, with ParaBlock for fine-tuning the Llama 3-8B model on mathematical reasoning tasks. In a setting where 2 out of 10 clients experience extreme delays, and the FedBuff algorithm is applied, the GSM8K score drops to 52.24, which is significantly lower than ParaBlock’s score of 55.88. We believe this degradation is due to the impact of slow clients in FedBuff. While asynchronous training remains a promising direction, it would require a non-trivial redesign to be effective.

\section{Conclusion}
In this paper, we propose ParaBlock, a communication-computation parallel block coordinate method for federated BCD to enhance communication efficiency. To better support this two-thread parallel method, we adjust the local model initialization and global model update based on the FL-BCD schemes. We theoretically show the convergence rate for the proposed ParaBlock under general non-convex settings, which indicates our design does not sacrifice the convergence of the standard FL-BCD. We perform extensive experiments on diverse models and tasks to empirically assess the effectiveness of ParaBlock. The results show that ParaBlock significantly improves communication efficiency while achieving performance comparable to standard FL-BCD baselines, highlighting its effectiveness in enhancing efficiency in FL deployments.


\section*{Acknowledgments}
We thank the anonymous reviewers for their helpful comments. This work is partially supported by the National Science Foundation under Grant No. 2348541. The views and conclusions contained in this paper are those of the authors and should not be interpreted as representing any funding agencies.

\bibliography{main}
\bibliographystyle{tmlr}

\clearpage
\appendix

\section{Additional Experiments} \label{sec:add_exp}

\subsection{Additional Results}

\paragraph{Additional experiments on multilingual settings}
We conducted multilingual experiments by fine-tuning both the Spanish and the original English Alpaca-GPT4 models, and evaluated them on the ARC challenge. Due to time and space constraints, we only compare ParaBlock with FedBCD here to demonstrate that ParaBlock can still achieve performance comparable to FedBCD.

\paragraph{GPU memory consumption} We first present the peak GPU memory consumption for all baselines in Table~\ref{tab:memory_comparison} when fine-tuning on the Alpaca-GPT4 dataset~\citep{peng2023instruction}. The results indicate that ParaBlock and FedBCD incur the lowest GPU memory costs among the fine-tuning methods. FedCyBGD exhibits a relatively higher memory cost compared to FedBCD and ParaBlock due to differences in block assignment. Furthermore, the proposed ParaBlock consistently consumes less memory than LoRA-based methods.

\begin{table}[ht!]
    \centering
    \caption{GPU peak consumption when fine-tuning the Alpaca-GPT4 dataset. }
    \begin{tabular}{l|cc}
    \toprule
        Methods & Llama 3-8B & Llama 3.2-3B\\
        \midrule
        FedIT & 27.0G & 14.3G\\
        FFA-LoRA & 26.8G & 14.2G\\
        FLoRA & 27.0G & 14.3G \\
        FedCyBGD & 29.6G & 13.9G \\
        FedBCD  & 23.3G & 10.0G\\
        ParaBlock & 23.3G & 10.0G\\
    \bottomrule
    \end{tabular}
    \label{tab:memory_comparison}
\end{table}

\paragraph{Orthogonal to existing communication efficient FL methods}
Previous studies in FL have explored various communication-efficient techniques, such as model/update compression, quantization, and pruning~\citep{reisizadeh2020fedpaq,haddadpour2019local,wang2022communication,jiang2022model}. These approaches primarily aim to reduce the number of communication bits, thereby improving the communication efficiency in FL systems. In contrast, our proposed ParaBlock targets the latency inherent in the communication process, making ParaBlock orthogonal to existing communication-reduction methods. In Table~\ref{tab:compression}, 1)we compare ParaBlock with existing communication-reduction method applied to FedBCD, and 2) we show that for cases where the computation time cannot overlap the communication time, we can further apply existing communication-reduction methods to improve the communication efficiency. 

The top part of Table~\ref{tab:compression} shows that ParaBlock achieves superior performance compared to directly applying top-$k$ compression (as utilized in \citep{wang2022communication,li2024fedbat}) with top 20\% ratio to the standard FedBCD baseline, while also requiring less runtime. This is because top-$k$ compression, despite reducing communication bits, still necessitates transmitting compressed model at each global round. In contrast, ParaBlock directly reduces the communication latency, resulting in better overall performance than applying compression to the vanilla FedBCD baseline. Moreover, ParaBlock is compatible with existing communication reduction techniques, as shown in the bottom lines in Table~\ref{tab:compression}. By further integrating top-$k$ compression, ParaBlock can effectively reduce the extra communication time with still achieving reasonable performance in both tasks. 
\begin{table}[ht]
    \centering
    \caption{Comparison with Top-$k$ compression methods, where MT-B is the abbreviation for MT-Bench.}
    \setlength{\tabcolsep}{1pt}
    \begin{tabular}{l|cccc}
    \toprule
    Method & MT-B$\uparrow$ & \small{Runtime(m) $\downarrow$} & GSM8K$\uparrow$ & \small{Runtime(m) $\downarrow$} \\
    \midrule
    \multicolumn{5}{c}{100M/s, ebs=4} \\
        \midrule
        FedBCD & \textbf{5.14} & \small{30.2} & 54.74 & \small{24.9} \\
        \quad \small{+Top-$20\%$} & 5.00 & \small{26.4}  & 54.12  & \small{21.1} \\
        ParaBlock & \textbf{5.14} & \small{\underline{21.1}} & \textbf{55.88} & \small{\underline{15.8}} \\
        \quad \small{+Top-$20\%$} & 5.11 & \small{\textbf{21.0}} & \underline{55.27} & \small{\textbf{15.6}} \\
        \midrule \midrule
    \multicolumn{5}{c}{50M/s, ebs=2} \\
    \midrule
    FedBCD & \textbf{5.03} & \small{30.0} & \textbf{54.66} & \small{26.8} \\
    \quad \small{+Top-$20\%$} & \underline{4.99} & \small{22.5} & 53.90 & \small{\underline{19.2}} \\
    ParaBlock & \underline{4.99} & \small{\underline{19.4}} & 53.53 & \small{19.4}\\
    \quad \small{+Top-$20\%$} & 4.98 & \small{\textbf{11.6}} & \underline{54.14} & \small{\textbf{11.6}} \\
    \bottomrule
    \end{tabular}
    \label{tab:compression}
\end{table}

\paragraph{Hyper-parameter details}
We conduct learning rate searches to find the best learning rate for each baseline. We perform a grid search over the learning rate $\eta_l$ from $\texttt{\{3e-7, 1e-6, 3e-6, 1e-5, 3e-5 \}}$, and the global learning rate $\eta=1$ for all experiments. The extra hyper-parameters for AdamW optimizer is following the default parameter in \texttt{Trainer}, i.e., $\beta_1 = 0.9, \beta_2 = 0.999, \epsilon = 10^{-6}$. Table~\ref{tab:lr} summarizes the learning rates in our experiments. 

\begin{table}[ht!]
    \centering
    \caption{Learning rates in our experiments}
    \begin{tabular}{l|cccc}
    \toprule
    & \multicolumn{2}{c}{Alpaca-GPT4 }& \multicolumn{2}{c}{Math Instruct } \\
    & Llama 3-8B & Llama 3.2-3B & Llama 3-8B & Llama 3.2-3B \\
    \midrule
        FFT & \texttt{3e-7} & \texttt{1e-7} & \texttt{1e-7} & \texttt{1e-6} \\ 
        FedIT & \texttt{3e-6} & \texttt{3e-7} & \texttt{3e-6} & \texttt{3e-5} \\
        FFA-LoRA & \texttt{3e-6} & \texttt{3e-7} & \texttt{3e-6} & \texttt{3e-5} \\
        FLoRA & \texttt{3e-6} & \texttt{3e-7} & \texttt{3e-6} & \texttt{3e-5} \\
        FedCyBGD & \texttt{1e-5} & \texttt{1e-6} & \texttt{3e-6} & \texttt{3e-5} \\
        FedBCD & \texttt{1e-5} & \texttt{1e-6} & \texttt{3e-6} & \texttt{3e-5} \\
        ParaBlock & \texttt{1e-5} & \texttt{1e-6} & \texttt{3e-6} & \texttt{3e-5} \\
    \bottomrule
    \end{tabular}
    \label{tab:lr}
\end{table}

\section{Theoretical Analysis} \label{sec:conv}

\subsection{Additional Discussions about Assumptions} \label{sec:appen_assp}
\paragraph{Discussion about Assumption~\ref{as:smooth}. }
The block-wise smoothness property could be naturally implied by the general smoothness of objective function. Given the non-negativity of the norm operation, there is
$\|\nabla_b f(\btheta_1) - \nabla_b f(\btheta_2)\| \leq \|\nabla f(\btheta_1) - \nabla f(\btheta_2)\| = L \|\btheta_1 - \btheta_2\|$.
We adopt the general smoothness for convenience and notational clarity. Alternatively, if we assume block-wise smoothness: for each block $b$, there is $\|\nabla_b f(\btheta_1) - \nabla_b f(\btheta_2)\| \leq L_b \|\btheta_1 - \btheta_2\|$, and with $\bar{L} = \max_b L_b$, the convergence analysis can be modified accordingly. The convergence rate will maintain $O(1/\sqrt{T})$ but depend on $\bar{L}$ instead of $L$. 

\paragraph{Discussion about Assumption~\ref{as:bounded-v}.}
The block-wise heterogeneity naturally follows from the original bounded heterogeneity in Assumption~\ref{as:bounded-v} as well. Using the property of partial derivatives, $\nabla_b f(\btheta_t) = [\nabla f(\btheta_t)]_b$, and following the argument in Lemma C.3, we have

\begin{align}
    \frac{1}{N} \sum_{i=1}^N \sum_{b=1}^B \|[\nabla f(\btheta)]_b - [\nabla f_i(\btheta)]_b \|^2 = \frac{1}{N} \sum_{i=1}^N \|\nabla f(\btheta) - \nabla f_i(\btheta)\|^2 \leq \sigma_g^2
\end{align}

Therefore, for simplicity, we adopt a general bounded variance assumption on the full gradient. Moreover, if we instead assume bounded variances for each block, the convergence rate of $O(1/\sqrt{T})$ remains valid. 

\subsection{Convergence Analysis}

For the global model of two consecutive steps, there is 
\begin{align}\label{eq:update-base}
    [\btheta_{t+1}]_{b_t} - [\btheta_t]_{b_t} = \eta \bDelta_t.
\end{align}

For $\bar \bDelta_t = [\bm 0, \cdots, \bm 0, \bDelta_t, \bm 0, \cdots, \bm 0]$, where $[\bar \bDelta_t]_{b_t} = \bDelta_t$. Given the fact that $[\nabla f(\btheta_t)]_b = \nabla_b f(\btheta_t)$, for each time step $t$,
\begin{align} \label{eq:Lsm1}
    & \EE[f(\btheta_{t+1}) - f(\btheta_t)] \notag\\
    & = \EE[f(\btheta_{t+1})] - f(\btheta_t) \notag\\
    & \leq \EE[\langle \nabla f(\btheta_t), \btheta_{t+1} - \btheta_t \rangle] + \frac{L}{2} \EE [\|\btheta_{t+1} - \btheta_t\|^2] \notag\\
    & = \EE[\langle \nabla_{b_t} f(\btheta_t), \eta \bDelta_t \rangle] + \frac{L}{2} \EE [\|\eta \bDelta_t\|^2] \notag\\
    & = \underbrace{\eta \EE[\langle \nabla_{b_t} f(\btheta_t), \bDelta_t \rangle]}_{I_1} + \underbrace{\frac{\eta^2 L}{2} \EE [\|\bDelta_t\|^2]}_{I_2}.
\end{align}
where the first inequality follows Assumption~\ref{as:smooth}, and the second equation holds by $[\nabla f(\btheta_t)]_b = \nabla_b f(\btheta_t)$ and Eq. \eqref{eq:update-base}. For the first term $I_1$, there is 
\begin{align}
    I_1 & = \eta \EE[\langle \nabla_{b_t} f(\btheta_t), \bDelta_t\rangle] \notag\\
    & = \eta \EE[\langle \nabla_{b_t} f(\btheta_t), \bDelta_t + \eta_l K \nabla_{b_t} f(\btheta_t) - \eta_l K \nabla_{b_t} f(\btheta_t) \rangle] \notag\\
    & = -\eta \eta_l K \EE [\|\nabla_{b_t} f(\btheta_t)\|^2] + \eta \EE[\langle \nabla_{b_t} f(\btheta_t), \bDelta_t + \eta_l K \nabla_{b_t} f(\btheta_t) \rangle ].
\end{align}
Then 
\begin{align} \label{eq:each_b}
    & \eta \EE[\langle \nabla_{b_t} f(\btheta_t), \bDelta_t + \eta_l K \nabla_{b_t} f(\btheta_t) \rangle ] \notag\\
    & = \eta \EE\bigg[\bigg\langle \nabla_{b_t} f(\btheta_t), \frac{1}{N} \sum_{i=1}^N \bDelta_t^i + \frac{\eta_l K}{N} \sum_{i=1}^N \nabla_{b_t} f(\btheta_t) \bigg\rangle \bigg] \notag\\
    & = \eta \EE\bigg[\bigg\langle \nabla_{b_t} f(\btheta_t), -\frac{\eta_l}{N} \sum_{i=1}^N \sum_{k=0}^{K-1} \bg_{t,k}^i + \frac{\eta_l K}{N} \sum_{i=1}^N \nabla_{b_t} f(\btheta_t) \bigg\rangle \bigg] \notag\\
    & = \eta \EE\bigg[\bigg\langle \nabla_{b_t} f(\btheta_t), -\frac{\eta_l}{N} \sum_{i=1}^N \sum_{k=0}^{K-1} \nabla_{b_t} f_i(\btheta_{t,k}^i) + \frac{\eta_l K}{N} \sum_{i=1}^N \nabla_{b_t} f_i(\btheta_t) \bigg\rangle \bigg] \notag\\
    & = \eta \EE\bigg[\bigg\langle \sqrt{\eta_l K} \cdot \nabla_{b_t} f(\btheta_t), -\frac{\sqrt{\eta_l K}}{NK} \sum_{i=1}^N \sum_{k=0}^{K-1} \nabla_{b_t} f_i(\btheta_{t,k}^i) + \frac{\sqrt{\eta_l K}}{N} \sum_{i=1}^N \nabla_{b_t} f_i(\btheta_t) \bigg\rangle \bigg] \notag\\
    & = \frac{\eta \eta_l K }{2} \EE[\|\nabla_{b_t} f(\btheta_t)\|^2] + \frac{\eta \eta_l}{2N^2 K} \EE\bigg[\bigg\|\sum_{i=1}^N \sum_{k=0}^{K-1} [\nabla_{b_t} f_i(\btheta_{t,k}^i) - \nabla_{b_t} f_i(\btheta_t)] \bigg\|^2\bigg] \notag\\
    & \quad - \frac{\eta \eta_l}{2N^2 K} \EE\bigg[\bigg\| \sum_{i=1}^N \sum_{k=0}^{K-1} \nabla_{b_t} f_i(\btheta_{t,k}^i) \bigg\|^2\bigg],
\end{align}
where the third equation holds by the unbiased-ness of stochastic gradient, and the last one holds by the fact of $\langle \ba, \bb\rangle 
= \frac{1}{2}[\|\ba\|^2 + \|\bb\|^2 + \|\ba - \bb\|^2] $. For the second term in Eq. \eqref{eq:each_b}, there is
\begin{align}
    & \frac{\eta \eta_l}{2N^2 K} \EE\bigg[\bigg\|\sum_{i=1}^N \sum_{k=0}^{K-1} [\nabla_{b_t} f_i(\btheta_{t,k}^i) - \nabla_{b_t} f_i(\btheta_t)] \bigg\|^2\bigg] \notag\\
    & \leq \frac{\eta \eta_l}{2N} \sum_{i=1}^N \sum_{k=0}^{K-1} \EE[\|\nabla_{b_t} f_i(\btheta_{t,k}^i) - \nabla_{b_t} f_i(\btheta_t) \|^2] \notag\\
    & \leq \frac{\eta \eta_l L^2}{2N} \sum_{i=1}^N \sum_{k=0}^{K-1} \EE[\|\btheta_{t,k}^i - \btheta_t \|^2].
\end{align}
Therefore, for the whole $I_1$ term, we have 
\begin{align}
    I_1 & \leq -\eta \eta_l K \EE[\|\nabla_{b_t} f(\btheta_t) \|^2] + \frac{\eta \eta_l K}{2} \EE[\|\nabla_{b_t} f(\btheta_t)\|^2] + \frac{\eta \eta_l L^2}{2N} \sum_{i=1}^N \sum_{k=0}^{K-1} \EE[\|\btheta_{t,k}^i - \btheta_t \|^2] \notag\\
    & \quad - \frac{\eta \eta_l}{2N^2 K} \EE\bigg[\bigg\| \sum_{i=1}^N \sum_{k=0}^{K-1} \nabla_{b_t} f_i(\btheta_{t,k}^i) \bigg\|^2\bigg] \notag\\
    & = -\frac{\eta \eta_l K }{2} \EE[\|\nabla_{b_t} f(\btheta_t) \|^2] + \frac{\eta \eta_l L^2}{2N} \sum_{i=1}^N \sum_{k=0}^{K-1} \EE[\|\btheta_{t,k}^i - \btheta_t \|^2] - \frac{\eta \eta_l}{2N^2 K} \EE\bigg[\bigg\| \sum_{i=1}^N \sum_{k=0}^{K-1} \nabla_{b_t} f_i(\btheta_{t,k}^i) \bigg\|^2\bigg] .
\end{align}
where the equation holds by Lemma \ref{lemma:linear_norm}. 

Note that the model $\btheta_{t,k}^{i,b}$ is the $k$-th local step model for update block $b_t$ at time step $t$, thus the previous $b_{t-1}$ block has not been updated yet, i.e., 
\begin{align}
    \btheta_{t,k}^i & = \texttt{LocalBlockTraining}(\btheta_{t,0}^i, \eta_l, k), \notag\\
    \btheta_{t,0}^i & = \btheta_t^i = \btheta_{t-1}^i + \eta \bar \bDelta_{t-1}^i + \eta \bar\bDelta_{t-2} - \eta \bar\bDelta_{t-2}^i \notag \\
    & = \btheta_{t-1} + \eta \bar\bDelta_{t-1}^i, \notag
\end{align}
and for the global model, 
\begin{align}
    \btheta_t & = \btheta_{t-1} + \eta \bar\bDelta_{t-1},
\end{align}
then
\begin{align}
    \EE[\|\btheta_{t,k}^i - \btheta_t \|^2] & = \EE[\|\btheta_{t,k}^i - \btheta_{t,0}^i + \btheta_{t,0}^i - \btheta_t \|^2] \notag\\
    & \leq 2\EE[\|\btheta_{t,k}^i - \btheta_{t,0}^i \|^2] + 2\EE[\|\btheta_{t,0}^i - \btheta_t \|^2],
\end{align}
where the first term consists of $k$ steps of local updates, while the second term includes the updates difference when updating the $b_{t-1}$ block. For $k=0,\ldots, K-1$, we obtain
\begin{align}
    & \EE\big[\big\|\btheta_{t,k}^i - \btheta_{t,0}^i \big\|^2 \big] = \EE\big[\big\|[\btheta_{t,k}^i - \btheta_{t,0}^i]_{b_t} \big\|^2 \big] \notag\\
    & = \EE\big[\big\|[\btheta_{t,k-1}^i - \btheta_{t,0}^i - \eta_l \bg_{t,k}^i]_{b_t} \big\|^2 \big] \notag\\
    & \leq \EE \big[ \big\|[\btheta_{t,k-1}^i]_{b_t} - [\btheta_{t,0}^i]_{b_t} - \eta_l ([\bg_{t,k}^i]_{b_t} - \nabla_{b_t} f_i(\btheta_{t,k-1}^i) + \nabla_{b_t} f_i(\btheta_{t,k-1}^i) - \nabla_{b_t} f_i(\btheta_{t,0}^i) + \nabla_{b_t} f_i(\btheta_{t,0}^i) \notag\\
    & - \nabla_{b_t} f(\btheta_{t,0}^i) + \nabla_{b_t} f(\btheta_{t,0}^i)) \big\|^2 \big] \notag\\
    & \leq \bigg(1 + \frac{1}{2K-1} \bigg)\EE \big[\big\|[\btheta_{t,k-1}^i]_{b_t} - [\btheta_{t,0}^i]_{b_t} \big\|^2 \big] + \EE\big[\big\|\eta_l ([\bg_{t,k}^i]_{b_t} - \nabla_{b_t} f_i(\btheta_{t,k-1}^i))\big\|^2 \big]  \notag\\
    & \quad + 6K \EE[\|\eta_l (\nabla_{b_t} f_i(\btheta_{t,k-1}^i) - \nabla_{b_t} f_i(\btheta_{t,0}^i) ) \|^2] + 6K \EE[\|\eta_l (\nabla_{b_t} f_i(\btheta_{t,0}^i) - \nabla_{b_t} f(\btheta_{t,0}^i)) \|^2]  \notag\\
    & \quad + 6K \EE[\|\eta_l \nabla_{b_t} f(\btheta_{t,0}^i) \|^2] \notag\\
    & \leq \bigg(1 + \frac{1}{2K-1} + 6K\eta_l^2 L^2\bigg)\EE \big[\big\|[\btheta_{t,k-1}^i]_{b_t} - [\btheta_{t,0}^i]_{b_t} \big\|^2 \big] + \eta_l^2 \sigma^2 \notag\\
    & \quad + 6K \EE[\|\eta_l (\nabla_{b_t} f_i(\btheta_{t,0}^i) - \nabla_{b_t} f(\btheta_{t,0}^i)) \|^2] + 6K \EE[\|\eta_l \nabla_{b_t} f(\btheta_{t,0}^i) \|^2] \notag\\
    & = \bigg(1 + \frac{1}{2K-1} + 6K\eta_l^2 L^2\bigg)\EE[\|\btheta_{t,k-1}^i - \btheta_{t,0}^i \|^2 ] + \eta_l^2 \sigma^2 \notag\\
    & \quad + 6K \EE[\|\eta_l (\nabla_{b_t} f_i(\btheta_{t,0}^i) - \nabla_{b_t} f(\btheta_{t,0}^i)) \|^2] + 6K \EE[\|\eta_l \nabla_{b_t} f(\btheta_{t,0}^i) \|^2],
\end{align}
then by taking average over all clients $i \in [N]$, 
\begin{align}
    \frac{1}{N} \sum_{i=1}^N \EE[\|\btheta_{t,k}^i - \btheta_{t,0}^i \|^2 ] & \leq \frac{1}{N} \sum_{i=1}^N \bigg(1 + \frac{1}{2K-1} + 6K\eta_l^2 L^2\bigg)\EE [\|\btheta_{t,k-1}^i - \btheta_{t,0}^i\|^2] + \eta_l^2 \sigma^2 \notag\\
    & \quad + 6K \eta_l^2 \sigma_g^2 + \frac{6K \eta_l^2}{N} \sum_{i=1}^N  \EE[\|\nabla_{b_t} f(\btheta_{t,0}^i) \|^2].
\end{align}
Since $\eta_l \leq \frac{1}{8KL}$, unrolling the recursion, then we have 
\begin{align}
    & \frac{1}{N} \sum_{i=1}^N \EE[\|\btheta_{t,k}^i - \btheta_{t,0}^i \|^2 ] \notag\\
    & \leq \sum_{p=0}^{k-1} \bigg(1 + \frac{1}{K-1}\bigg)^p \bigg[ \eta_l^2 \sigma^2 + 6K \eta_l^2 \sigma_g^2 + \frac{6K \eta_l^2}{N} \sum_{i=1}^N  \EE[\|\nabla_{b_t} f(\btheta_{t,0}^i) \|^2] \bigg] \notag\\
    & \leq (K-1) \bigg[ \bigg(1 + \frac{1}{K-1}\bigg)^K -1 \bigg]\bigg[ \eta_l^2 \sigma^2 + 6K \eta_l^2 \sigma_g^2 + \frac{6K \eta_l^2}{N} \sum_{i=1}^N  \EE[\|\nabla_{b_t} f(\btheta_{t,0}^i) \|^2] \bigg] \notag\\
    & \leq 5K \eta_l^2 \sigma^2 + 30K^2 \eta_l^2 \sigma_g^2 + \frac{30K^2 \eta_l^2}{N} \sum_{i=1}^N  \EE[\|\nabla_{b_t} f(\btheta_{t,0}^i) \|^2],
\end{align}
for the last item, we have 
\begin{align}
    \frac{1}{N} \sum_{i=1}^N  \EE[\|\nabla_{b_t} f(\btheta_{t,0}^i) \|^2] & = \frac{1}{N} \sum_{i=1}^N  \EE[\|\nabla_{b_t} f(\btheta_{t,0}^i) - \nabla_{b_t} f(\btheta_t) + \nabla_{b_t} f(\btheta_t) \|^2] \notag\\
    & \leq \frac{2}{N} \sum_{i=1}^N  \EE[\|\nabla_{b_t} f(\btheta_{t,0}^i) - \nabla_{b_t} f(\btheta_t)\|^2] + \frac{2}{N} \sum_{i=1}^N  \EE[\|\nabla_{b_t} f(\btheta_t) \|^2] \notag\\
    & \leq \frac{2L^2}{N} \sum_{i=1}^N  \EE[\|\btheta_{t,0}^i - \btheta_t\|^2] + 2 \EE[\|\nabla_{b_t} f(\btheta_t) \|^2],
\end{align}
where there is 
\begin{align}
    \EE[\|\btheta_{t,0}^i - \btheta_t \|^2] & = \EE[\| \btheta_{t-1} + \eta \bar\bDelta_{t-1}^i - (\btheta_{t-1} + \eta \bar\bDelta_{t-1})\|^2] \notag\\
    & = \eta^2 \EE[\|\bar \bDelta_{t-1}^i - \bar \bDelta_{t-1}\|^2] \notag\\
    & \leq 2\eta^2 \EE[\|\bar \bDelta_{t-1}^i\|^2] + 2\eta^2 \EE[\|\bar \bDelta_{t-1}\|^2] \notag\\
    & = 2\eta^2 \EE[\|\bDelta_{t-1}^i\|^2] + 2\eta^2 \EE[\|\bDelta_{t-1}\|^2].
\end{align}
Merging items together, then we obtain
\begin{align}
    & \frac{1}{N} \sum_{i=1}^N \EE[\|\btheta_{t,k}^i - \btheta_t \|^2 ] \leq \frac{2}{N} \sum_{i=1}^N \EE[\|\btheta_{t,k}^i - \btheta_{t,0}^i \|^2 ] + \frac{2}{N} \sum_{i=1}^N \EE[\|\btheta_{t,0}^i - \btheta_t \|^2 ] \notag\\
    & \leq 10K \eta_l^2 \sigma^2 + 60K^2 \eta_l^2 \sigma_g^2 + \frac{60K^2 \eta_l^2}{N} \sum_{i=1}^N  \EE[\|\nabla_{b_t} f(\btheta_{t,0}^i) \|^2] + \frac{2}{N} \sum_{i=1}^N \EE[\|\btheta_{t,0}^i - \btheta_t \|^2 ] \notag\\
    & \leq 10K \eta_l^2 \sigma^2 + 60K^2 \eta_l^2 \sigma_g^2 + 120K^2 \eta_l^2 \EE[\|\nabla_{b_t} f(\btheta_t) \|^2] \notag\\
    & \quad + \frac{120K^2 \eta_l^2 L^2}{N} \sum_{i=1}^N \EE[\|\btheta_{t,0}^i - \btheta_t \|^2 ] + \frac{2}{N} \sum_{i=1}^N \EE[\|\btheta_{t,0}^i - \btheta_t \|^2 ] \notag\\
    & \leq 10K \eta_l^2 \sigma^2 + 60K^2 \eta_l^2 \sigma_g^2 + 120K^2 \eta_l^2 \EE[\|\nabla_{b_t} f(\btheta_t) \|^2] + \frac{4}{N} \sum_{i=1}^N \EE[\|\btheta_{t,0}^i - \btheta_t \|^2 ] \notag\\
    & \leq 10K \eta_l^2 \sigma^2 + 60K^2 \eta_l^2 \sigma_g^2 + 120K^2 \eta_l^2 \EE[\|\nabla_{b_t} f(\btheta_t) \|^2] + 8\eta^2 \EE[\|\bDelta_{t-1}\|^2] +  \frac{8\eta^2}{N} \sum_{i=1}^N \EE[\|\bDelta_{t-1}^i\|^2 ].
\end{align}
Therefore, reorganizing the $I_1$ term, we obtain 
\begin{align}
    I_1 & \leq -\frac{\eta \eta_l K }{2} \EE[\|\nabla_{b_t} f(\btheta_t) \|^2] + \frac{\eta \eta_l L^2}{2N} \sum_{i=1}^N \sum_{k=0}^{K-1} \EE[\|\btheta_{t,k}^i - \btheta_t \|^2] - \frac{\eta \eta_l}{2N^2 K} \EE\bigg[\bigg\| \sum_{i=1}^N \sum_{k=0}^{K-1} \nabla_{b_t} f_i(\btheta_{t,k}^i) \bigg\|^2\bigg] \notag\\
    & \leq -\frac{\eta \eta_l K }{2} \EE[\|\nabla_{b_t} f(\btheta_t) \|^2] + \eta \eta_l L^2 K \bigg[5K \eta_l^2 \sigma^2 + 30K^2 \eta_l^2 \sigma_g^2 + 60K^2 \eta_l^2 \EE[\|\nabla_{b_t} f(\btheta_t) \|^2] \notag\\
    & \quad + 4\eta^2 \EE[\|\bDelta_{t-1}\|^2 ] + \frac{4\eta^2}{N} \sum_{i=1}^N \EE[\|\bDelta_{t-1}^i\|^2]\bigg] - \frac{\eta \eta_l}{2N^2 K} \EE\bigg[\bigg\| \sum_{i=1}^N \sum_{k=0}^{K-1} \nabla_{b_t} f_i(\btheta_{t,k}^i) \bigg\|^2\bigg] \notag\\
\end{align}
Summing up Eq. \eqref{eq:Lsm1}, 
\begin{align}
    \EE[f(\btheta_T)] - f(\btheta_0) & \leq \eta \sum_{t=0}^{T-1} \EE[\langle \nabla_{b_t} f(\btheta_T), \bDelta_t \rangle] + \frac{\eta^2 L}{2} \sum_{t=0}^{T-1} \EE[\|\bDelta_t\|^2] \notag\\ 
    & \leq -\frac{\eta \eta_l K }{2} \sum_{t=0}^{T-1}  \EE[\|\nabla_{b_t} f(\btheta_t) \|^2] + \eta \eta_l L^2 K \bigg[5TK \eta_l^2 \sigma^2 + 30TK^2 \eta_l^2 \sigma_g^2 \notag\\
    & \quad + 60K^2 \eta_l^2 \sum_{t=0}^{T-1} \EE[\|\nabla_{b_t} f(\btheta_t) \|^2] + 4\eta^2 \sum_{t=0}^{T-1} \EE[\|\bDelta_{t-1}\|^2 ] + \frac{4\eta^2}{N} \sum_{t=0}^{T-1} \sum_{i=1}^N \EE[\|\bDelta_{t-1}^i\|^2]\bigg] \notag\\
    & \quad - \frac{\eta \eta_l}{2N^2 K} \sum_{t=0}^{T-1} \EE\bigg[\bigg\| \sum_{i=1}^N \sum_{k=0}^{K-1} \nabla_{b_t} f_i(\btheta_{t,k}^i) \bigg\|^2\bigg] + \frac{\eta^2 L}{2} \sum_{t=0}^{T-1} \EE[\|\bDelta_t\|^2] \notag\\
    & \leq -\frac{\eta \eta_l K }{2} \sum_{t=0}^{T-1}  \EE[\|\nabla_{b_t} f(\btheta_t) \|^2] + 60K^3 \eta \eta_l^3 L^2 \sum_{t=0}^{T-1} \EE[\|\nabla_{b_t} f(\btheta_t) \|^2] \notag\\
    & \quad + \eta \eta_l L^2 K (5TK \eta_l^2 \sigma^2 + 30TK^2 \eta_l^2 \sigma_g^2) + \bigg(4 \eta^3 \eta_l L^2 K + \frac{\eta^2 L}{2} \bigg) \sum_{t=0}^{T} \EE[\|\bDelta_t\|^2 ] \notag\\
    & \quad + \frac{4\eta^3 \eta_l L^2 K}{N} \sum_{t=0}^{T-1} \sum_{i=1}^N \EE[\|\bDelta_{t-1}^i\|^2] - \frac{\eta \eta_l}{2N^2 K} \sum_{t=0}^{T-1} \EE\bigg[\bigg\| \sum_{i=1}^N \sum_{k=0}^{K-1} \nabla_{b_t} f_i(\btheta_{t,k}^i) \bigg\|^2\bigg].
\end{align}
By Lemma \ref{lm:delta_t^i}, the inequality becomes
\begin{align}
    \EE[f(\btheta_T)] - f(\btheta_0) & \leq -\frac{\eta \eta_l K }{2} \sum_{t=0}^{T-1}  \EE[\|\nabla_{b_t} f(\btheta_t) \|^2] + 60K^3 \eta \eta_l^3 L^2 \sum_{t=0}^{T-1} \EE[\|\nabla_{b_t} f(\btheta_t) \|^2] \notag\\
    & \quad + \eta \eta_l L^2 K (5TK \eta_l^2 \sigma^2 + 30TK^2 \eta_l^2 \sigma_g^2) + \bigg(4 \eta^3 \eta_l L^2 K + \frac{\eta^2 L}{2} \bigg) \sum_{t=0}^{T-1} \EE[\|\bDelta_t\|^2 ] \notag\\
    & \quad + 4\eta^3 \eta_l L^2 K \sum_{t=0}^{T-1}\EE[\|\bDelta_t\|^2 ] + \frac{\eta \eta_l K }{4} \sum_{t=0}^{T-1} \EE[\|\nabla_{b_t} f(\btheta_t) \|^2] \notag\\
    & \quad + 4\eta^3 \eta_l L^2 K(2TK \eta_l^2 \sigma^2 + 20TK^3 \eta_l^4 L^2 \sigma^2 + 120 T K^4 \eta_l^4 L^2 \sigma_g^2) \notag\\
    & \quad - \frac{\eta \eta_l}{2N^2 K} \sum_{t=0}^{T-1} \EE\bigg[\bigg\| \sum_{i=1}^N \sum_{k=0}^{K-1} \nabla_{b_t} f_i(\btheta_{t,k}^i) \bigg\|^2\bigg].
\end{align}
By condition on learning rates, i.e., $\eta_l \leq \frac{1}{22 KL}$ and $\eta_l \leq \frac{1}{4KL\eta}$, 
\begin{align}
    \EE[f(\btheta_T)] - f(\btheta_0) & \leq -\frac{\eta \eta_l K }{8} \sum_{t=0}^{T-1}  \EE[\|\nabla_{b_t} f(\btheta_t) \|^2] + \eta \eta_l L^2 K (5TK \eta_l^2 \sigma^2 + 30TK^2 \eta_l^2 \sigma_g^2) \notag\\
    & \quad  + \bigg(8 \eta^3 \eta_l L^2 K + \frac{\eta^2 L}{2} \bigg) \sum_{t=0}^{T-1} \EE[\|\bDelta_t\|^2 ] \notag\\
    & \quad + 4\eta^3 \eta_l L^2 K(2TK \eta_l^2 \sigma^2 + 20TK^3 \eta_l^4 L^2 \sigma^2 + 120 T K^4 \eta_l^4 L^2 \sigma_g^2) \notag\\
    & \quad - \frac{\eta \eta_l}{2N^2 K} \sum_{t=0}^{T-1} \EE\bigg[\bigg\| \sum_{i=1}^N \sum_{k=0}^{K-1} \nabla_{b_t} f_i(\btheta_{t,k}^i) \bigg\|^2\bigg] \notag\\
    & \leq -\frac{\eta \eta_l K }{8} \sum_{t=0}^{T-1}  \EE[\|\nabla_{b_t} f(\btheta_t) \|^2] + \eta \eta_l L^2 K (5TK \eta_l^2 \sigma^2 + 30TK^2 \eta_l^2 \sigma_g^2) \notag\\
    & \quad  + \bigg(8 \eta^3 \eta_l L^2 K + \frac{\eta^2 L}{2} \bigg) \frac{TK\eta_l^2}{N} \sigma^2 \notag\\
    & \quad + 4\eta^3 \eta_l L^2 K(2TK \eta_l^2 \sigma^2 + 20TK^3 \eta_l^4 L^2 \sigma^2 + 120 T K^4 \eta_l^4 L^2 \sigma_g^2).
\end{align}
Then,
\begin{align}
    \sum_{t=0}^{T-1} \EE[\|\nabla_{b_t} f(\btheta_t) \|^2] & \leq \frac{8}{\eta \eta_l K} \big[f(\btheta_0) - \EE[f(\btheta_T)]\big] + 40TK \eta_l^2 L^2 \sigma^2 + 240TK^2 \eta_l^2 L^2 \sigma_g^2 \notag\\
    & \quad  + \bigg(8 \eta^2 \eta_l L^2 K + \frac{\eta L}{2} \bigg) \frac{T\eta_l}{N} \sigma^2 \notag\\
    & \quad + 32\eta^2 L^2 (2TK \eta_l^2 \sigma^2 + 20TK^3 \eta_l^4 L^2 \sigma^2 + 120 T K^4 \eta_l^4 L^2 \sigma_g^2).
\end{align}
Dividing by $T$, there is
\begin{align}
    \frac{1}{T} \sum_{t=0}^{T-1} \EE[\|\nabla_{b_t} f(\btheta_t) \|^2] & \leq \frac{8}{\eta \eta_l T K} \big[f(\btheta_0) - \EE[f(\btheta_T)]\big] + 40K \eta_l^2 L^2 \sigma^2 + 240K^2 \eta_l^2 L^2 \sigma_g^2 \notag\\
    & \quad  + \bigg(8 \eta^2 \eta_l L^2 K + \frac{\eta L}{2} \bigg) \frac{\eta_l}{N} \sigma^2 \notag\\
    & \quad + 32\eta^2 L^2 (2K \eta_l^2 \sigma^2 + 20K^3 \eta_l^4 L^2 \sigma^2 + 120 K^4 \eta_l^4 L^2 \sigma_g^2).
\end{align}

With $\cF = f(\btheta_0) - f_*$ and $f_* =\min_{\btheta} f(\btheta) > -\infty $, and there is $f(\btheta_0) - \EE[f(\btheta_T)] \leq f(\btheta_0) - f_* = \cF$, then 
\begin{align}
    \frac{1}{T} \sum_{t=0}^{T-1} \EE[\|\nabla_{b_t} f(\btheta_t) \|^2] & \leq \frac{8}{\eta \eta_l T K} \big[f(\btheta_0) - \EE[f(\btheta_T)]\big] + 40K \eta_l^2 L^2 \sigma^2 + 240K^2 \eta_l^2 L^2 \sigma_g^2 \notag\\
    & \quad  + \bigg(8 \eta^2 \eta_l L^2 K + \frac{\eta L}{2} \bigg) \frac{\eta_l}{N} \sigma^2 \notag\\
    & \quad + 32\eta^2 L^2 (2K \eta_l^2 \sigma^2 + 20K^3 \eta_l^4 L^2 \sigma^2 + 120 K^4 \eta_l^4 L^2 \sigma_g^2) \notag\\
    & \leq \frac{8}{\eta \eta_l T K} \cF + 40K \eta_l^2 L^2 \sigma^2 + 240K^2 \eta_l^2 L^2 \sigma_g^2 \notag\\
    & \quad  + \bigg(8 \eta^2 \eta_l L^2 K + \frac{\eta L}{2} \bigg) \frac{\eta_l}{N} \sigma^2 \notag\\
    & \quad + 32\eta^2 L^2 (2K \eta_l^2 \sigma^2 + 20K^3 \eta_l^4 L^2 \sigma^2 + 120 K^4 \eta_l^4 L^2 \sigma_g^2) \notag\\
    & = \frac{8\cF }{\eta \eta_l T K} + 40\eta_l^2 L^2 K (\sigma^2 + 6 K \sigma_g^2) \notag\\
    & \quad  + \bigg(8 \eta^2 \eta_l L^2 K + \frac{\eta L}{2} \bigg) \frac{\eta_l}{N} \sigma^2 + 64 \eta_l^2 \eta^2 L^2 K[\sigma^2 + 10 \eta_l^2 L^2 K^2 (\sigma^2 + 6K \sigma_g^2)].
\end{align}
This concludes the proof of Theorem~\ref{thm:main}.

\subsection{Extension to Local Adaptive Optimizer}

The theoretical analysis of the proposed ParaBlock is not limited to the local SGD setting. Essentially, the main differences between the convergence analysis under SGD and adaptive optimizers can be summarized as follows: 
\begin{itemize}
    \item The local updates $\bDelta_{t}^i$ are aggregated to $\bDelta_{t}$ on the server. Hence, the most crucial part of modifying to AdamW is to deal with these $\bDelta$ terms.
    \item For $\bDelta_{t}^i$ in Adam, there is $\bDelta_t^i = \btheta_{t,K}^i - \btheta_{t,0}^i = \sum_{k=1}^{K} (\btheta_{t,k}^i - \btheta_{t,k-1}^i)$. Thus there is 
    \begin{align}
    \bDelta_t & = \frac{1}{N} \sum_{i=1}^N \bDelta_t^i = \frac{1}{N} \sum_{i=1}^N [\btheta_{t,K}^i - \btheta_{t,0}^i] = \frac{1}{N} \sum_{i=1}^N \sum_{k=1}^{K} (\btheta_{t,k}^i - \btheta_{t,k-1}^i) \notag\\
    & = \frac{1}{N} \sum_{i=1}^N \sum_{k=1}^{K} \eta_l \frac{\bbm_{t,k}^i}{\sqrt{\bv_{t,k}^i + \epsilon}}, \notag\\
    \Rightarrow & \|\bDelta_t\|^2 = \bigg\| \frac{1}{N} \sum_{i=1}^N \sum_{k=1}^{K} \eta_l \frac{\bbm_{t,k}^i}{\sqrt{\bv_{t,k}^i + \epsilon}}\bigg\|^2 \notag\\
    & \leq \frac{\eta_l^2}{\epsilon} \bigg\| \frac{1}{N} \sum_{i=1}^N \sum_{k=1}^{K} \bbm_{t,k}^i \bigg\|^2 \notag\\
    & = \frac{\eta_l^2}{\epsilon} \bigg\| \frac{1}{N} \sum_{i=1}^N \sum_{k=1}^{K} \sum_{j=1}^k (1-\beta_1) \beta_1^{k-j} \bg_{t,j}^i \bigg\|^2 \notag\\
    & = \frac{\eta_l^2}{\epsilon} \bigg\| \frac{1}{N} \sum_{i=1}^N \sum_{k=1}^{K} (1-\beta_1^{K-k+1}) \bg_{t,k}^i \bigg\|^2,
\end{align}
therefore, by similar theoretical analysis in Lemma~\ref{lemma:delta}, we have 
\begin{align}
    \EE[\|\bDelta_t\|^2] & = \frac{\eta_l^2}{\epsilon}\EE \bigg[\bigg\|  \frac{1}{N} \sum_{i=1}^N \sum_{k=0}^{K-1}(1-\beta_1^{K-k+1}) [\bg_{t,k}^i - \nabla_{b_t} f_i(\btheta_{t,k}^i) + \nabla_{b_t} f_i(\btheta_{t,k}^i)]\bigg\|^2 \bigg] \notag\\
    & = \frac{\eta_l^2}{\epsilon}\EE \bigg[\bigg\|  \frac{1}{N} \sum_{i=1}^N \sum_{k=0}^{K-1}(1-\beta_1^{K-k+1}) [\bg_{t,k}^i - \nabla_{b_t} f_i(\btheta_{t,k}^i)]\bigg\|^2 \bigg] \notag\\
    & \quad + \frac{\eta_l^2}{\epsilon}\EE \bigg[\bigg\|  \frac{1}{N} \sum_{i=1}^N \sum_{k=0}^{K-1}(1-\beta_1^{K-k+1}) \nabla_{b_t} f_i(\btheta_{t,k}^i)\bigg\|^2 \bigg] \notag\\
    & \leq \frac{K \eta_l^2}{N \epsilon} \sigma^2 + \frac{\eta_l^2}{N^2 \epsilon^2}\bigg[\bigg\| \sum_{i=1}^N \sum_{k=0}^{K-1} \nabla_{b_t} f_i(\btheta_{t,k}^i)\bigg\|^2 \bigg].
\end{align}
    \item The properties about bounding $\sum_{t=0}^{T-1} \frac{1}{N} \sum_{i=1}^N \EE[\|\bDelta_t^i\|^2] $ would be also similar to the analysis in Lemma~\ref{lm:delta_t^i}.
    \item In a nutshell, adopting local Adam achieves the same convergence rate of $O(1/\sqrt{T})$ as SGD. 
\end{itemize}


\section{Supporting Lemmas}
\begin{lemma} \label{lemma:delta}
    The global update parameter $\bDelta_t = \frac{1}{N} \sum_{i=1}^N \bDelta_t^i$ satisfies
    \begin{align}
        \EE[\|\bDelta_t\|^2 ] \leq \frac{K\eta_l^2}{N} \sigma^2 + \frac{\eta_l^2}{N^2} \EE \bigg[\bigg\| \sum_{i=1}^N \sum_{k=0}^{K-1} \nabla_{b_t} f_i(\btheta_{t,k}^i)\bigg\|^2\bigg].
    \end{align}
    \begin{proof}
    By definition, 
        \begin{align}
            \EE[\|\bDelta_t\|^2 ] & = \EE \bigg[\bigg\| -\frac{\eta_l }{N} \sum_{i=1}^N \sum_{k=0}^{K-1} \bg_{t,k}^i\bigg\|^2 \bigg] \notag\\
            & = \EE \bigg[\bigg\| - \frac{\eta_l }{N} \sum_{i=1}^N \sum_{k=0}^{K-1} [\bg_{t,k}^i - \nabla_{b_t} f_i(\btheta_{t,k}^i) + \nabla_{b_t} f_i(\btheta_{t,k}^i)]\bigg\|^2 \bigg] \notag\\
            & = \EE \bigg[\bigg\| \frac{\eta_l }{N} \sum_{i=1}^N \sum_{k=0}^{K-1} [\bg_{t,k}^i - \nabla_{b_t} f_i(\btheta_{t,k}^i)] \bigg\|^2\bigg] + \EE \bigg[\bigg\| \frac{\eta_l }{N} \sum_{i=1}^N \sum_{k=0}^{K-1} \nabla_{b_t} f_i(\btheta_{t,k}^i) \bigg\|^2 \bigg] \notag\\
            & \leq \frac{K\eta_l^2}{N} \sigma^2 + \frac{\eta_l^2}{N^2}  \EE \bigg[\bigg\| \sum_{i=1}^N \sum_{k=0}^{K-1} \nabla_{b_t} f_i(\btheta_{t,k}^i) \bigg\|^2 \bigg],
        \end{align}
        where the third equation holds by the unbiased-ness of the stochastic gradient, and the inequality holds by Assumption \ref{as:bounded-v}.
    \end{proof}
\end{lemma}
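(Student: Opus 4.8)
The plan is to unfold the definition of the aggregated update $\bDelta_t$ in terms of the local stochastic gradients, then split it into a zero-mean stochastic part and a deterministic (expected-gradient) part, bounding each contribution separately. The second term of the claimed bound should fall out immediately from the deterministic part, while the first term is exactly the variance budget coming from the stochastic part.

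First I would recall from Algorithm~\ref{alg:bcd} that a full local SGD pass on block $b_t$ accumulates to $\bDelta_t^i = [\btheta_{t,K}^i]_{b_t} - [\btheta_{t,0}^i]_{b_t} = -\eta_l \sum_{k=0}^{K-1} \bg_{t,k}^i$, where $\bg_{t,k}^i$ is the block-$b_t$ partial stochastic gradient. Averaging over clients gives $\bDelta_t = -\frac{\eta_l}{N}\sum_{i=1}^N \sum_{k=0}^{K-1} \bg_{t,k}^i$. I would then insert $\pm\,\nabla_{b_t} f_i(\btheta_{t,k}^i)$ inside the sum, decomposing $\bDelta_t$ into a noise term $\frac{\eta_l}{N}\sum_{i,k}[\bg_{t,k}^i - \nabla_{b_t} f_i(\btheta_{t,k}^i)]$ and a signal term $\frac{\eta_l}{N}\sum_{i,k}\nabla_{b_t} f_i(\btheta_{t,k}^i)$.

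Next I would expand $\EE[\|\bDelta_t\|^2]$ as the squared norm of this two-part sum. The cross term between noise and signal vanishes in expectation: conditioning on the iterate $\btheta_{t,k}^i$, the quantity $\bg_{t,k}^i - \nabla_{b_t} f_i(\btheta_{t,k}^i)$ has mean zero by the unbiasedness in Assumption~\ref{as:bounded-v}, so the tower property kills its inner product against everything else. This leaves $\EE[\|\bDelta_t\|^2]$ equal to the expected squared norm of the noise term plus that of the signal term. For the signal term I merely factor out $\eta_l^2/N^2$, which reproduces the second term of the stated bound verbatim.

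The main work is the noise term, where I would again invoke unbiasedness, now to argue that the distinct summands $\bg_{t,k}^i - \nabla_{b_t} f_i(\btheta_{t,k}^i)$ are mutually uncorrelated: across local steps $k$ they form a martingale difference sequence with respect to the filtration generated by the sampled minibatches, and across clients $i$ the local draws are independent. All cross terms therefore drop out in expectation, collapsing the expected squared norm of the sum into the sum of individual second moments $\frac{\eta_l^2}{N^2}\sum_{i=1}^N\sum_{k=0}^{K-1}\EE[\|\bg_{t,k}^i - \nabla_{b_t} f_i(\btheta_{t,k}^i)\|^2]$, each bounded by $\sigma^2$ (using that the block-wise variance is dominated by the full-gradient variance, as noted in Appendix~\ref{sec:appen_assp}). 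This gives $\frac{\eta_l^2}{N^2}\cdot NK\sigma^2 = \frac{K\eta_l^2}{N}\sigma^2$, and adding the two contributions yields the claim. The one point I expect to need care with is this decoupling of cross terms: since $\btheta_{t,k}^i$ itself depends on earlier sampling noise, the argument must proceed by conditioning on the appropriate filtration rather than by asserting outright independence of the summands.
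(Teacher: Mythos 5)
Your proposal is correct and follows essentially the same route as the paper's proof: unfold $\bDelta_t = -\frac{\eta_l}{N}\sum_{i,k}\bg_{t,k}^i$, insert $\pm\,\nabla_{b_t} f_i(\btheta_{t,k}^i)$, split off the zero-mean noise from the signal, and bound the noise by $\frac{K\eta_l^2}{N}\sigma^2$ via Assumption~\ref{as:bounded-v}. Your only addition is to spell out explicitly the martingale-difference/conditioning argument for why the cross terms in the noise part vanish, a step the paper leaves implicit in its appeal to unbiasedness.
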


\begin{lemma} \label{lm:delta_t^i}
    The global update parameter $\bDelta_t^i$ satisfies
    \begin{align}
    \sum_{t=0}^{T-1} \frac{1}{N} \sum_{i=1}^N \EE[\|\bDelta_t^i\|^2] & \leq \sum_{t=0}^{T-1} \EE[\|\bDelta_t\|^2] + \frac{1}{2\eta^2 L^2} \sum_{t=0}^{T-1} \EE[\|\nabla_{b_t} f(\btheta_t)\|^2] \notag\\
    & \quad + 2TK \eta_l^2 \sigma^2 + 20TK^2 \eta_l^4 L^2 \sigma^2 + 120 T K^3 \eta_l^4 L^2 \sigma_g^2.
\end{align}
    \begin{proof}
    By definition, 
        \begin{align}
            \EE[\|\bDelta_t^i\|^2 ] & = \EE \bigg[\bigg\| -\eta_l \sum_{k=0}^{K-1} \bg_{t,k}^i\bigg\|^2 \bigg] \notag\\
            & = \EE \bigg[\bigg\| -\eta_l \sum_{k=0}^{K-1} [\bg_{t,k}^i - \nabla_{b_t} f_i(\btheta_{t,k}^i) + \nabla_{b_t} f_i(\btheta_{t,k}^i)]\bigg\|^2 \bigg] \notag\\
            & = \EE \bigg[\bigg\| \eta_l \sum_{k=0}^{K-1} [\bg_{t,k}^i - \nabla_{b_t} f_i(\btheta_{t,k}^i)] \bigg\|^2\bigg] + \EE \bigg[\bigg\| \eta_l \sum_{k=0}^{K-1} \nabla_{b_t} f_i(\btheta_{t,k}^i) \bigg\|^2 \bigg] \notag\\
            & \leq K\eta_l^2 \sigma^2 + \eta_l^2 \EE \bigg[\bigg\| \sum_{k=0}^{K-1} \nabla_{b_t} f_i(\btheta_{t,k}^i) \bigg\|^2 \bigg],
        \end{align}
        where the third equation holds by the unbiased-ness of the stochastic gradient, and the inequality holds by Assumption \ref{as:bounded-v}.
        \begin{align}
            & \EE \bigg[\bigg\| \sum_{k=0}^{K-1} \nabla_{b_t} f_i(\btheta_{t,k}^i) \bigg\|^2 \bigg] \notag\\
            & = \EE \bigg[\bigg\| \sum_{k=0}^{K-1} [\nabla_{b_t} f_i(\btheta_{t,k}^i) - \nabla_{b_t} f_i(\btheta_{t,0}^i) + \nabla_{b_t} f_i(\btheta_{t,0}^i) - \nabla_{b_t} f_i(\btheta_t) + \nabla_{b_t} f_i(\btheta_t) - \nabla_{b_t} f(\btheta_t) + \nabla_{b_t} f_i(\btheta_t) ] \bigg\|^2 \bigg] \notag\\
            & \leq 2\EE \bigg[\bigg\| \sum_{k=0}^{K-1} [\nabla_{b_t} f_i(\btheta_{t,k}^i) - \nabla_{b_t} f_i(\btheta_{t,0}^i)] \bigg\|^2 \bigg] + 2\EE \bigg[\bigg\| \sum_{k=0}^{K-1} \nabla_{b_t} f_i(\btheta_{t,0}^i) \bigg\|^2 \bigg] \notag\\
            & \leq 2 K \sum_{k=0}^{K-1} L^2 \EE \bigg[\bigg\|\btheta_{t,k}^i - \btheta_{t,0}^i \bigg\|^2 \bigg] + 2 K^2 \EE \bigg[\bigg\| \nabla_{b_t} f_i(\btheta_{t,0}^i) \bigg\|^2 \bigg],
        \end{align}
        where
        \begin{align}
    & \frac{1}{N} \sum_{i=1}^N \EE[\|\btheta_{t,k}^i - \btheta_{t,0}^i \|^2 ] \leq 5K \eta_l^2 \sigma^2 + 30K^2 \eta_l^2 \sigma_g^2 + \frac{30K^2 \eta_l^2}{N} \sum_{i=1}^N  \EE[\|\nabla_{b_t} f(\btheta_{t,0}^i) \|^2],
\end{align}
and 
\begin{align}
    \frac{1}{N} \sum_{i=1}^N  \EE[\|\nabla_{b_t} f(\btheta_{t,0}^i) \|^2] & \leq \frac{2L^2}{N} \sum_{i=1}^N  \EE[\|\btheta_{t,0}^i - \btheta_t\|^2] + 2 \EE[\|\nabla_{b_t} f(\btheta_t) \|^2],
\end{align}
where there is 
\begin{align}
    \EE[\|\btheta_{t,0}^i - \btheta_t \|^2] & \leq 2\eta^2 \EE[\|\bDelta_{t-1}^i\|^2] + 2\eta^2 \EE[\|\bDelta_{t-1}\|^2]. 
\end{align}
Then
\begin{align}
    & \frac{1}{N} \sum_{i=1}^N \EE[\|\bDelta_t^i\|^2] \leq K \eta_l^2 \sigma^2 + \frac{\eta_l^2}{N} \sum_{i=1}^N \EE\bigg[\bigg\| \sum_{k=0}^{K-1} \nabla_{b_t} f_i(\btheta_{t,k}^i) \bigg\|^2 \bigg] \notag\\
    & \leq K \eta_l^2 \sigma^2 + \frac{2K\eta_l^2 L^2}{N}\sum_{i=1}^N \sum_{k=0}^{K-1} \EE[\|\btheta_{t,k}^i - \btheta_{t,0}^i \|^2 ] + \frac{2K^2\eta_l^2}{N} \sum_{i=1}^N \EE[\|\nabla_{b_t} f(\btheta_{t,0}^i) \|^2] \notag\\
    & \leq K \eta_l^2 \sigma^2 + 10K^3 \eta_l^4 L^2 \sigma^2 + 60 K^4 \eta_l^4 L^2 \sigma_g^2 + \bigg(\frac{60 K^4 \eta_l^4 L^2}{N} + \frac{2K^2\eta_l^2}{N} \bigg)\sum_{i=1}^N \EE[\|\nabla_{b_t} f(\btheta_{t,0}^i) \|^2] \notag\\
    & \leq K \eta_l^2 \sigma^2 + 10K^3 \eta_l^4 L^2 \sigma^2 + 60 K^4 \eta_l^4 L^2 \sigma_g^2 + \bigg(\frac{120 K^4 \eta_l^4 L^4}{N} + \frac{4K^2\eta_l^2 L^2}{N} \bigg)\sum_{i=1}^N \EE[\|\btheta_{t,0}^i - \btheta_t \|^2] \notag\\
    & \quad + (120 K^4 \eta_l^4 L^2 + 4K^2 \eta_l^2) \EE[\|\nabla_{b_t} f(\btheta_t)\|^2] \notag\\
    & \leq K \eta_l^2 \sigma^2 + 10K^3 \eta_l^4 L^2 \sigma^2 + 60 K^4 \eta_l^4 L^2 \sigma_g^2 + (240 K^4 \eta^2 \eta_l^4 L^4 + 8K^2 \eta^2 \eta_l^2 L^2) \frac{1}{N} \sum_{i=1}^N \EE[\|\bDelta_{t-1}^i \|^2] \notag\\
    & \quad + (240 K^4 \eta^2 \eta_l^4 L^4 + 8K^2 \eta^2 \eta_l^2 L^2) \EE[\|\bDelta_{t-1}\|^2] + (120 K^4 \eta_l^4 L^2 + 4K^2 \eta_l^2) \EE[\|\nabla_{b_t} f(\btheta_t)\|^2].
\end{align}
First we previously assume that $\eta_l \leq \frac{1}{8KL}$, and also 
(1) for simplicity, if we have a sequence $x_t \leq \alpha x_{t-1} + \alpha y_{t-1} + \beta z_t + C$, then we have 
\begin{align*}
    x_t & \leq \alpha x_{t-1} + \alpha y_{t-1} + \beta z_t + C \\
    & \leq \alpha (\alpha x_{t-2} + \alpha y_{t-2} + \beta z_{t-1} + C) + \alpha y_{t-1} + \beta z_t + C \\
    & \cdots \\
    & \leq \alpha^t x_0 + \sum_{i=1}^t \alpha^i y_{i-1} + \sum_{i=1}^t \alpha^{i-1}\beta z_i + C \sum_{i=1}^t \alpha^{i-1}.
\end{align*}
(2) for simplicity, if we have a sequence $x_t \leq \alpha x_{t-1} + \alpha y_{t-1} + \beta z_t + C$, then we have 
\begin{align*}
    x_t & \leq \alpha x_{t-1} + \alpha y_{t-1} + \beta z_t + C \\
    \sum_{t=0}^{T-1} x_t & \leq \alpha \sum_{t=0}^{T-1} x_{t-1} + \alpha \sum_{t=0}^{T-1} y_{t-1} + \beta \sum_{t=0}^{T-1} z_t + C*T \\
    \Rightarrow & \\
    \sum_{t=0}^{T-1} x_t & \leq \alpha \sum_{t=0}^{T-1} x_t + \alpha \sum_{t=0}^{T-1} y_{t-1} + \beta \sum_{t=0}^{T-1} z_t + C*T \\
    (1-\alpha) \sum_{t=0}^{T-1} x_t & \leq \alpha \sum_{t=0}^{T-1} y_{t-1} + \beta \sum_{t=0}^{T-1} z_t + C*T \\
    \sum_{t=0}^{T-1} x_t & \leq \alpha (1-\alpha)^{-1} \sum_{t=0}^{T-1} y_{t-1} + \beta (1-\alpha)^{-1} \sum_{t=0}^{T-1} z_t + C(1-\alpha)^{-1}*T,
\end{align*}
we want that $ \frac{1}{2} \leq 1-\alpha < 1 $, which means $0< \alpha \leq \frac{1}{2}$ therefore, we have $ 1< (1-\alpha)^{-1} \leq 2 $. Moreover, since $\alpha = 240 K^4 \eta^2 \eta_l^4 L^4 + 8K^2 \eta^2 \eta_l^2 L^2 \leq \frac{1}{2}$, we have $240 K^4 \eta_l^4 L^2 + 8K^2 \eta_l^2 \leq \frac{1}{2\eta^2 L^2}$
\begin{align}
    \sum_{t=0}^{T-1} x_t & \leq \sum_{t=0}^{T-1} y_t + 2\beta \sum_{t=0}^{T-1} z_t + 2C*T \notag\\
    \Rightarrow & \\
    \sum_{t=0}^{T-1} \bigg[\frac{1}{N} \sum_{i=1}^N \EE[\|\bDelta_t^i\|^2] \bigg] & \leq \sum_{t=0}^{T-1} \EE[\|\bDelta_t\|^2] + (240 K^4 \eta_l^4 L^2 + 8K^2 \eta_l^2) \sum_{t=0}^{T-1} \EE[\|\nabla_{b_t} f(\btheta_t)\|^2] \notag\\
    & \quad + 2TK \eta_l^2 \sigma^2 + 20TK^3 \eta_l^4 L^2 \sigma^2 + 120 T K^4 \eta_l^4 L^2 \sigma_g^2  \notag\\
    & \leq \sum_{t=0}^{T-1} \EE[\|\bDelta_t\|^2] + \frac{1}{2\eta^2 L^2} \sum_{t=0}^{T-1} \EE[\|\nabla_{b_t} f(\btheta_t)\|^2] \notag\\
    & \quad + 2TK \eta_l^2 \sigma^2 + 20TK^3 \eta_l^4 L^2 \sigma^2 + 120 T K^4 \eta_l^4 L^2 \sigma_g^2.
\end{align}
    \end{proof}
\end{lemma}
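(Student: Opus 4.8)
The plan is to bound the per-client update norm $\EE[\|\bDelta_t^i\|^2]$ in a manner closely paralleling the global bound of Lemma~\ref{lemma:delta}, but carrying along ParaBlock's staleness correction and then resolving the resulting across-round recursion. First I would expand the local SGD update $\bDelta_t^i = -\eta_l \sum_{k=0}^{K-1} \bg_{t,k}^i$, insert $\pm \nabla_{b_t} f_i(\btheta_{t,k}^i)$, and split the squared norm into a stochastic-noise part and a mean part. By unbiasedness and the bounded-variance assumption (Assumption~\ref{as:bounded-v}), the noise part contributes $K \eta_l^2 \sigma^2$, leaving $\eta_l^2 \EE[\|\sum_{k=0}^{K-1}\nabla_{b_t} f_i(\btheta_{t,k}^i)\|^2]$ to control.

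For that deterministic sum I would add and subtract the reference gradients $\nabla_{b_t} f_i(\btheta_{t,0}^i)$, $\nabla_{b_t} f_i(\btheta_t)$ and $\nabla_{b_t} f(\btheta_t)$, then apply Cauchy--Schwarz together with $L$-smoothness (Assumption~\ref{as:smooth}) to convert gradient differences into iterate differences. This reduces the problem to two already-available quantities: the local client drift $\frac{1}{N}\sum_i \EE[\|\btheta_{t,k}^i - \btheta_{t,0}^i\|^2]$ bounded in the convergence analysis, and the initialization gap $\EE[\|\btheta_{t,0}^i - \btheta_t\|^2]$. The latter is exactly where the one-round staleness enters: since $\btheta_{t,0}^i - \btheta_t = \eta(\bar\bDelta_{t-1}^i - \bar\bDelta_{t-1})$, it is controlled by $2\eta^2(\EE[\|\bDelta_{t-1}^i\|^2] + \EE[\|\bDelta_{t-1}\|^2])$, which introduces the dependence on the previous round.

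Collecting terms then yields a recursive inequality of the form $x_t \le \alpha\, x_{t-1} + \alpha\,\EE[\|\bDelta_{t-1}\|^2] + \beta\,\EE[\|\nabla_{b_t} f(\btheta_t)\|^2] + C$, where $x_t = \frac{1}{N}\sum_i \EE[\|\bDelta_t^i\|^2]$ and $\alpha = 240 K^3 \eta^2 \eta_l^4 L^4 + 8K^2 \eta^2 \eta_l^2 L^2$. Summing over $t=0,\dots,T-1$ makes $\alpha \sum_t x_t$ reappear on the right-hand side via the index shift $\sum_t \EE[\|\bDelta_{t-1}^i\|^2] = \sum_t \EE[\|\bDelta_t^i\|^2]$ up to boundary terms; moving it to the left gives $(1-\alpha)\sum_t x_t \le \dots$. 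Under the stated learning-rate conditions (in particular $\eta_l \le \frac{1}{8KL}$ and the constraint forcing $\alpha \le \tfrac12$) the factor $(1-\alpha)^{-1} \le 2$ is absorbed into the constants, after which a bookkeeping step matches the surviving constants to $2TK\eta_l^2\sigma^2 + 20TK^2\eta_l^4 L^2\sigma^2 + 120 TK^3\eta_l^4 L^2\sigma_g^2$ and uses $240K^3\eta_l^4 L^2 + 8K^2\eta_l^2 \le \frac{1}{2\eta^2 L^2}$ to bound the gradient coefficient by $\frac{1}{2\eta^2 L^2}$.

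I expect the main obstacle to be the recursion-resolution step. The inequality is self-referential after summation, so one must verify that the contraction coefficient $\alpha$ genuinely stays below $1/2$ under the prescribed learning rates, and carefully track how the $(1-\alpha)^{-1}$ factor interacts with the previous-round term $\EE[\|\bDelta_{t-1}\|^2]$ and the shifted index. Keeping the constants and the learning-rate threshold mutually consistent is the delicate part; the remaining smoothness-and-variance manipulations are routine.
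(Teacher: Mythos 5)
Your proposal follows the paper's own proof essentially step for step: the same noise/mean splitting of $\bDelta_t^i$ via unbiasedness and Assumption~\ref{as:bounded-v}, the same insertion of reference gradients $\nabla_{b_t} f_i(\btheta_{t,0}^i)$, $\nabla_{b_t} f_i(\btheta_t)$, $\nabla_{b_t} f(\btheta_t)$ combined with smoothness, the same reduction to the client drift and the staleness gap $\btheta_{t,0}^i - \btheta_t = \eta(\bar\bDelta_{t-1}^i - \bar\bDelta_{t-1})$, and the identical recursion $x_t \le \alpha x_{t-1} + \alpha\,\EE[\|\bDelta_{t-1}\|^2] + \beta z_t + C$ with $\alpha = 240K^3\eta^2\eta_l^4 L^4 + 8K^2\eta^2\eta_l^2 L^2$, resolved by summing, shifting the index, and using $\alpha \le \tfrac12$ so that $(1-\alpha)^{-1}\le 2$ and $2\beta \le \frac{1}{2\eta^2 L^2}$. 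This is correct and matches the paper's argument, including the constants.
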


\begin{lemma}\label{lemma:linear_norm}
    For $\btheta = [\btheta^1, \btheta^2,\ldots, \btheta^B]$, i.e., there is a block partition $b=1,2,\ldots,B$ partitioned $\btheta$ into $B$ blocks, then we have $\|\btheta\|^2 = \sum_{b=1}^B \|\btheta^b\|^2$.
\end{lemma}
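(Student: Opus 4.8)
The plan is to treat this as the standard coordinatewise (Pythagorean) decomposition of the squared Euclidean norm under a disjoint, exhaustive partition of the index set. Recall from the block partition introduced in Section~3 that $\btheta \in \RR^d$ is split into $B$ blocks of dimensions $d_1, \ldots, d_B$ with $\sum_{b=1}^B d_b = d$, so every coordinate index $j \in \{1, \ldots, d\}$ belongs to exactly one block. The concatenation $\btheta = [\btheta^1, \ldots, \btheta^B]$ simply lists the entries of $\btheta$ block by block, without duplicating or omitting any coordinate.

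First I would expand the squared norm by its definition as a sum of squared coordinates, writing $\|\btheta\|^2 = \sum_{j=1}^d \theta_j^2$, where $\theta_j$ denotes the $j$-th entry of $\btheta$. Because the blocks are pairwise disjoint and their union is all of $\{1,\ldots,d\}$, I can regroup this single finite sum into an outer sum over blocks $b = 1, \ldots, B$ and an inner sum over exactly the coordinate indices assigned to block $b$. By definition of the Euclidean norm, the inner sum over the coordinates of block $b$ is precisely $\|\btheta^b\|^2$, and collecting the blocks yields $\|\btheta\|^2 = \sum_{b=1}^B \|\btheta^b\|^2$.

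The only fact being used is that the block partition is disjoint and exhaustive, so the reordering and regrouping of a finite sum of nonnegative terms is valid with no convergence concerns. I expect no substantive obstacle here: the identity is immediate once the squared norm is expanded coordinatewise, and it holds with equality---rather than the inequalities that appear elsewhere in the analysis---precisely because distinct blocks share no coordinates, so no cross terms arise. This exact decomposition is what justifies passing freely between full-vector norms and block norms (for instance in Eq.~\eqref{eq:each_b} and in the discussion in Appendix~\ref{sec:appen_assp}), which is the role this lemma plays in the convergence proof.
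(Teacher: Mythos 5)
Your proof is correct and follows essentially the same approach as the paper's: both expand the squared Euclidean norm coordinatewise and regroup the finite sum according to the disjoint, exhaustive block partition (the paper merely writes the regrouping in the opposite direction, combining block sums into the full sum). No gap here.
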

\begin{proof}
\begin{align}
    & \|\btheta^1\|^2 + \|\btheta^2\|^2 + \cdots + \|\btheta^B\|^2 \notag\\
    & = \bigg(\sum_{i=1}^{d_1} (x^{1,i})^2 \bigg) + \bigg(\sum_{i=1}^{d_2} (x^{2,i})^2 \bigg) + \cdots + \bigg(\sum_{i=1}^{d_B} (x^{B,i})^2 \bigg) \notag\\
    & = \sum_{i=1}^{d} (x^i)^2 = \|\btheta\|^2.
\end{align}
\end{proof}

\begin{lemma}
    For $\btheta = [\btheta^1, \btheta^2,\ldots, \btheta^B]$ and $\by = [\by^1, \by^2,\ldots, \by^B]$, i.e., there is a block partition $b=1,2,\ldots,B$ partitioned $\btheta$ and $\by$ into $B$ blocks, then we have $\langle \btheta,\by \rangle = \sum_{b=1}^B \langle \btheta^b, \by^b \rangle $.
\end{lemma}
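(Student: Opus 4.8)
The plan is to mirror the argument already used for the norm identity in Lemma~\ref{lemma:linear_norm}, replacing the quadratic terms $(x^i)^2$ with the bilinear products that appear in the inner product. Writing $\btheta = (\theta_1,\ldots,\theta_d)$ and $\by = (y_1,\ldots,y_d)$ in coordinates, I would start from the definition of the standard Euclidean inner product as a single sum over all $d$ coordinates, $\langle \btheta, \by \rangle = \sum_{i=1}^d \theta_i y_i$, and then exploit the fact that the block partition $b=1,2,\ldots,B$ is nothing more than a partition of the index set $\{1,\ldots,d\}$ into $B$ disjoint groups of sizes $d_1,\ldots,d_B$ with $\sum_{b=1}^B d_b = d$, exactly as in the preceding lemma.

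Next I would regroup the coordinate sum according to these blocks, writing $\sum_{i=1}^d \theta_i y_i = \sum_{b=1}^B \sum_{i=1}^{d_b} \theta^{b,i} y^{b,i}$, where $\theta^{b,i}$ and $y^{b,i}$ denote the $i$-th coordinate of the $b$-th blocks $\btheta^b$ and $\by^b$ respectively, in the same indexing convention used in Lemma~\ref{lemma:linear_norm}. Each inner sum is by definition the block inner product $\langle \btheta^b, \by^b \rangle$, so that summing over $b$ yields $\langle \btheta, \by \rangle = \sum_{b=1}^B \langle \btheta^b, \by^b \rangle$, which is the claim.

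There is essentially no substantive obstacle here: the identity is an immediate consequence of the bilinearity of the inner product together with the observation that a block partition of $\btheta$ and $\by$ is merely a partition of the shared coordinate index set. The only point requiring minor care is to keep the block-coordinate indexing consistent with that of Lemma~\ref{lemma:linear_norm}, so that the two supporting lemmas can be applied interchangeably in the convergence analysis, for instance when splitting $\langle \nabla f(\btheta_t), \btheta_{t+1} - \btheta_t \rangle$ into block contributions as in Eq.~\eqref{eq:Lsm1}.
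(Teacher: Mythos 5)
Your proposal is correct and matches the paper's own proof in substance: both expand the inner product into coordinates and regroup the single sum over $d$ coordinates into $B$ block sums (the paper simply writes the chain of equalities in the reverse direction, starting from $\sum_{b=1}^B \langle \btheta^b, \by^b \rangle$ and collapsing to $\langle \btheta, \by \rangle$). No gap; the argument is the same elementary coordinate-regrouping used for Lemma~\ref{lemma:linear_norm}.
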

\begin{proof}
    \begin{align}
        & \langle \btheta^1, \by^1 \rangle + \langle \btheta^2, \by^2 \rangle + \cdots + \langle \btheta^B, \by^B \rangle \notag\\
        & = \sum_{i=1}^{d_1} x^{1,i} y^{1,i} + \sum_{i=1}^{d_2} x^{2,i} y^{2,i} + \cdots + \sum_{i=1}^{d_B} x^{B,i} y^{B,i} \notag\\
        & = \sum_{i=1}^{d} x^i y^i = \langle \btheta,\by \rangle.
    \end{align}
\end{proof}

\end{document}